\newcommand\topstrut[1][1.2ex]{\setlength\bigstrutjot{#1}{\bigstrut[t]}}
\newcommand\botstrut[1][0.9ex]{\setlength\bigstrutjot{#1}{\bigstrut[b]}}
\newtheorem{definition}{Definition}
\newtheorem{theorem}{Theorem}
\newtheorem{proposition}{Proposition}
\newtheorem{lemma}{Lemma}
\newtheorem{corollary}{Corollary}
\title{To do or not to do: cost-sensitive causal decision-making}
\author{
Diego~Olaya \\
Solvay Business School\\
Vrije Universiteit Brussel\\
Brussels, Belgium\\
\texttt{diego.olaya@vub.be}

\And
Wouter~Verbeke\\
Faculty of Economics and Business \\
Katholieke Universiteit Leuven\\
Leuven, Belgium\\
\texttt{wouter.verbeke@kuleuven.be}

\And
Jente~Van~Belle \\
Solvay Business School\\
Vrije Universiteit Brussel\\
Brussels, Belgium\\
\texttt{jente.van.belle@vub.be}

\And
Marie-Anne~Guerry \\
Solvay Business School\\
Vrije Universiteit Brussel\\
Brussels, Belgium\\
\texttt{marie-anne.guerry@vub.be}
}
\begin{document}
\maketitle

\begin{abstract}
Causal classification models are adopted across a variety of operational business processes to predict the effect of a treatment on a categorical business outcome of interest depending on the process instance characteristics. This allows optimizing operational decision-making and selecting the optimal treatment to apply in each specific instance, with the aim of maximizing the positive outcome rate. While various powerful approaches have been presented in the literature for learning causal classification models, no formal framework has been elaborated for optimal decision-making based on the estimated individual treatment effects, given the cost of the various treatments and the benefit of the potential outcomes.

In this article, we therefore extend upon the expected value framework and formally introduce a cost-sensitive decision boundary for double binary causal classification, which is a linear function of the estimated individual treatment effect, the positive outcome probability and the cost and benefit parameters of the problem setting. The boundary allows causally classifying instances in the positive and negative treatment class to maximize the expected causal profit, which is introduced as the objective at hand in cost-sensitive causal classification. We introduce the expected causal profit ranker which ranks instances for maximizing the expected causal profit at each possible threshold for causally classifying instances and differs from the conventional ranking approach based on the individual treatment effect. The proposed ranking approach is experimentally evaluated on synthetic and marketing campaign data sets. The results indicate that the presented ranking method effectively outperforms the cost-insensitive ranking approach and allows boosting profitability.

\end{abstract}

\keywords{Causal classification \and Cost-sensitive learning \and Ranking \and Expected profit \and Decision boundary}

\section{Introduction}


Causal classification models support operational decision-making by providing estimates of the effect of an action at hand on a categorical business outcome of interest at an individual instance level. For example, causal classification models can estimate the effect of a discount on the purchase propensity of an individual customer, as such facilitating optimal marketing resource allocation \cite{ben2018optimal,simester2020targeting}. More generally, in a production process setting, a causal classification model can estimate the effect of a production parameter (e.g., time, temperature, operator, channel, etc.) on the resulting process output (e.g., yield, quality, process time, etc.) depending on process instance characteristics (e.g., environmental factors, raw material characteristics, case specifications, operator experience, etc.), consequently facilitating customization or optimization of the production process and maximizing intervention effectiveness \cite{gupta2020maximizing}.

In causal machine learning, the action at hand is called a treatment and the effect on the outcome for a specific instance is called the individual treatment effect (ITE). Various causal classification methods have been proposed and a variety of case studies have been presented in fields such as marketing \cite{ascarza2018retention,debaere2019reducing}, personalized medicine \cite{berrevoets2020organite}, human resources management \cite{rombaut2020effectiveness}, and learning analytics \cite{turkyilmaz2018causal,olaya2020uplift}.
To the best of our knowledge, however, no generic decision-making method has been formalized in the literature for optimally classifying instances in treatment classes based on the ITE estimates that are produced by a causal model that account for the cost of the treatments and the benefits of the outcomes. 

In this article, we build on the foundations of cost-sensitive classification as introduced by \cite{elkan2001foundations} and the foundations of cost-sensitive causal classification as introduced by \cite{verbeke2020foundations}, in order to formulate a cost-sensitive decision boundary for causally classifying instances in treatment classes, i.e., to decide which treatment to apply to an instance, as a function of the individual treatment effect and the positive outcome probability estimates that are produced by a causal classification model, as well as the cost and benefit parameters of the problem setting. 
Moreover, we introduce a novel cost-sensitive ranking method for causal classification, which maximizes the expected profit across all possible thresholds.  We present the application of the proposed ranking method on a synthetic data set as well as on data from digital advertising and customer retention campaigns, showcasing the potential of the novel ranking to outperform the standard, cost-insensitive ranking approach that is purely based on ITE estimates and to boost profitability. In the online appendix to this article, the experimental code and data sets are provided to facilitate reproduction of the presented results.

This article is structured as follows. In the following section, we provide a comprehensive review of the foundations of cost-sensitive classification and causal classification. Subsequently, we elaborate and analyze the cost-insensitive and cost-sensitive decision boundaries for causal classification, and introduce the novel cost-sensitive ranking method. Finally, experimental results are presented and discussed, before we conclude the article and present directions for future research.

\section{Related work}\label{sec:Prior}

This section formally introduces the classification and causal classification tasks and summarizes the cost-sensitive evaluation framework to assess performance of classification and causal classification models as presented in \cite{verbeke2020foundations}, which will be elaborated upon in the following section. 

\subsection{Classification}\label{sec:classification}

Classification is a common machine learning task that concerns the assignment of instances $x$ from the instance space $X \subseteq \mathbb{R}^n$ to a class or outcome $Y$. In this article, we focus on binary classification, i.e., $Y \in \{0,1\}.$ By convention, we refer to outcome $Y=1$ as the positive class and to outcome $Y=0$ as the negative class. Instances with outcome $Y=1$ are called positive instances and with $Y=0$ negative instances. From a data set of labeled instances, $\mathcal{D}=(x_i,y_i)$ with $i=1:n$, a binary classifier can learn a binary classification model, $m: X \to [0,1]$, which maps instances $x$ to a positive outcome probability, $P(Y=1|x)$ or $P_1$ in short, and a negative outcome probability, $P(Y=0|x) = 1-P(Y=1|x)$ or $P_0$ in short.

A class estimate or prediction, $\hat{Y} \in \{0,1\}$, is obtained by setting a decision threshold $\phi$. Instances with a positive outcome probability above the threshold, i.e., $P_1 > \phi$, are classified in the positive class, whereas instances with a positive outcome probability below the threshold, i.e., $P_1 < \phi$ are classified in the negative class. The optimal cost-insensitive classification threshold, $\phi^*_{ci}$, which maximizes the expected number of correctly classified outcomes, classifies an instance in the positive class if the positive outcome probability is greater than the negative outcome probability, i.e., if $P_1 > P_0$. Hence, since $P_1 + P_0 = 1$, we have:
\begin{equation}\label{eq:costinsensitiveclassification}
\phi^*_{ci} = 0.5.     
\end{equation}

The confusion matrix, $\mathbf{CF}(\phi)$, reports for an arbitrary classification threshold $\phi$ the number of \textit{True Positives} (TP) and \textit{False Negatives} (FN), which are the number of correctly and incorrectly classified positive instances, respectively, as well as the number of \textit{True Negatives} (TN) and \textit{False Positives} (FP), which are the number of correctly and incorrectly classified negative instances.

\begin{equation} \label{eq:CF}
    \begin{blockarray}{ccccc}
        & \BAmulticolumn{2}{c}{\text{Prediction}} & & \\
        & \hat{Y}=0 & \hat{Y}=1 & & \\
        \addlinespace
        \begin{block}{c[cc]cc}
            \topstrut \multirow{2}{*}{$\mathbf{CF(\phi)}$ :=} & TN(\phi) & FP(\phi) & \textit{Y}=0 & \multirow{2}{*}{Outcome}\\
            \addlinespace
             & FN(\phi) & TP(\phi) & \textit{Y}=1 & &\\
        \end{block}
    \end{blockarray}
\end{equation}

A range of metrics for evaluating the performance of classification models, such as accuracy, F1, sensitivity and specificity, are a function of the confusion matrix and thus of the threshold $\phi$. Therefore, these metrics can be misleading, and certainly so when the class distribution is imbalanced. Alternative measures that assess the quality of the classification model without making any assumption about the operational threshold include, for example, the area under the receiver operating characteristics curve (AUC) or the area under the precision-recall curve (PRAUC). 

Commonly, classifying instances incorrectly leads to misclassification costs, whereas correct classifications yield a benefit. All the above performance measures implicitly assume that false negatives and false positives imply equal misclassification costs and that true negatives and true positives imply equal benefits. In many problem settings, however, correctly and incorrectly classified instances within the negative and positive class involve variable costs and benefits, respectively. Cost-sensitive learning methods take into account these benefits and costs in learning a classification model or in setting a classification threshold and cost-sensitive performance measures in evaluating model performance. 

In this article, we focus on cost and benefit parameters that are constant across all instances of a class, i.e., class-dependent cost and benefit parameters. An extension to instance-dependent cost and benefit parameters is listed as a topic of interest for further research. The cost or benefit of classifying an instance from class $i$ in class $j$ is denoted by $cb_{ij}$\footnote{In literature, typically the benefit of a true negative and true positive are denoted with $b_0$ and $b_1$, respectively, and the cost of a false negative and a false positive with $c_0$ and $c_1$, respectively. For achieving a consistent notation across conventional and causal classification, we introduce the general notation $cb_{ij}$.}. A positive value of $cb_{ij}$ represents a benefit and a negative value a cost. The cost-benefit parameters are summarized in the cost-benefit matrix, \textbf{CB}:   

\begin{equation} \label{eq:CB_V1}
    \begin{blockarray}{ccccc}
        & \BAmulticolumn{2}{c}{\text{Prediction}} & & \\
        & \hat{Y}=0 & \hat{Y}=1 & & \\
        \addlinespace
        \begin{block}{c[cc]cc}
            \topstrut \multirow{2}{*}{$\mathbf{CB}$ :=} & cb_{00} & cb_{01} & Y=0 & \multirow{2}{*}{Outcome}\\
           \addlinespace
            & cb_{10} & cb_{11} & Y=1 & &\\
        \end{block}
    \end{blockarray}
\end{equation}

\cite{elkan2001foundations} establishes the optimal cost-sensitive threshold that maximizes the expected profit, $\phi^*_{cs}$, for classifying instances as a function of the cost-benefit parameters that are specified in the $\mathbf{CB}$ matrix. An instance $x_i$, with $P_{y,i}=P(Y=y|x_i)$, is to be classified in the positive class if the expected profit of classifying the instance in the positive class is greater than the expected profit of classifying the instance in the negative class, i.e., if:

\begin{equation} \label{eq:optimal_threshold_classification}
    P_{1,i} cb_{11} + P_{0,i} cb_{01} > P_{0,i} cb_{00} + P_{1,i} cb_{10},
\end{equation}

\noindent or, equivalently, when the positive outcome probability, $P_{1,i}$ is greater than the cost-sensitive classification threshold, $\phi_{cs}^*$, which can be obtained from Equation \ref{eq:optimal_threshold_classification} and is defined as: 

\begin{equation}
    \phi_{cs}^* = \frac{cb_{00} - cb_{01}}{cb_{11}+cb_{00}-cb_{01}-cb_{10}}.
    \label{eq:cost_theoretical_threshold}
\end{equation}

Note that using the optimal cost-sensitive classification threshold requires the classification model to produce calibrated probability estimates rather than scores, which merely allow ranking of the instances \cite{zadrozny2001obtaining,sheng2006thresholding}. 

\subsection{Causal classification}

Causal classification involves the assignment of instances $x$ from the instance space $X \subseteq \mathbb{R}^n$ to a class or outcome $Y$, as a function of a treatment $W$. That is, causal classification involves the estimation of the causal effect of a treatment on an outcome of interest at the level of an individual instance, i.e., the individual treatment effect.

Causal classification has been addressed in the literature as uplift modeling \cite{ascarza2018retention,devriendt2019you}, heterogeneous or individual treatment effect estimation \cite{wager2018estimation}, individualized treatment rule learning \cite{zhao2012estimating} and conditional average treatment effect estimation \cite{shalit2017estimating}. Causal classification is an instance of counterfactual estimation and causal learning \cite{pearl2009causality}. A subtle difference across various approaches is in the objective, which can either be to predict or to explain \cite{shmueli2010explain}, i.e., to optimize the treatments that are applied at the individual instance level, which is the objective in this article, or to establish and measure the strength of causal relations among a set of variables, which is the objective in causal inference and structural equation modeling. 

Causal classification methods for learning causal classification models include metalearners and modified classification methods. Metalearners are essentially schemes that allow the adoption of standard machine learning algorithms for estimating the ITE (see \cite{kunzel2019metalearners} for an overview). By contrast, various classification methods have been modified to directly learn models for estimating the ITE, such as neural networks \cite{shalit2017estimating}, support vector machines \cite{kuusisto2014support}, causal Bayesian additive regression trees \cite{hill2011bayesian}, causal boosting \cite{powers2018some}, causal forests \cite{athey2019generalized}. Moreover, multi-armed bandits have been proposed to estimate the optimal treatment by considering individuals' attributes or estimating responses based on past actions \cite{berrevoets2019optimising}. The main task of traditional multi-armed bandits, however, consists of maximizing the reward from an action (i.e., a treatment), rather than the individual-level effect or net increase in reward \cite{zhao2019uplift}. 

In this study, we employ the Neyman-Rubin framework to estimate individual treatment effects in terms of potential outcomes \cite{rubin1974estimating,splawa1990application}, and we focus on double binary causal classification, i.e., $Y \in \{0,1\}$ and $W \in \{0,1\}$. In line with the convention for the outcome variable, we refer to treatment $W=1$ as the positive treatment and to treatment $W=0$ as the negative treatment. The outcome of instance $x_i$ for treatment $w$ is denoted by $Y_{w,i}$. The individual treatment effect is obtained by contrasting the outcome for the positive and the negative treatment, i.e., $ITE = Y_{1,i} - Y_{0,i}$. 

In real-world conditions, the ITE cannot be observed, since only a single treatment at a time can be applied and hence a single outcome, $Y_{w,i}$, is observed. This is known as the fundamental problem of causal inference \cite{holland1986statistics}. The ITE is therefore estimated under the assumptions of ignorability, common support and stable unit treatment value (SUTVA) \cite{rubin1978bayesian,rosenbaum1983central}. In principle, a causal classification method requires two independent and identically distributed samples that are obtained by means of a randomized controlled trial\footnote{Various methods for learning causal models from observational data that suffers from selection bias, however, have been proposed in literature, for instance, by sampling or learning a balanced representation using propensity score matching \cite{caliendo2008some} or domain adversarial training \cite{ganin2016domain}} and that are representative of the population of interest for learning a causal classification model. A first sample is typically called the \textit{treatment group}, $T$, and concerns a set of instances that have been treated with the positive treatment, $W=1$. A second sample concerns instances that have been treated with the negative treatment, $W=0$, and is typically called the \textit{control group}, $C$. Often, the negative treatment represents a \textit{no treatment} scenario (e.g., no discount is offered, a placebo is given, etc.) and the control group serves as a baseline for evaluating the effect of a potential treatment, i.e., the positive treatment. The number of instances with outcome $y$ in the treatment and control group are denoted by $T_{y}$ and $C_{y}$, respectively.

A causal classification model is a function $\dot{m}: X \to t \in [-1,1]$, with $t$ the estimated ITE which is calculated as:

\begin{equation} \label{eq:estimated_ITE}
    t(x) := P_{11} - P_{10}.
\end{equation}

\noindent and with $P_{yw} = P(Y=y|x,W=w)$. Note that by convention, the estimated ITE represents the change in the positive outcome probability that is caused by applying the positive treatment, reflecting the objective of maximizing the positive outcome rate among the population by means of applying the positive treatment on a subset of the population \cite{fernandez2019causal}. In business decision-making applications, the objective in cost-insensitive causal classification is to identify instances for which application of the positive treatment results in the outcome of the instance changing from the negative to the positive class, i.e., instances with $ITE = 1$. Whereas in uplift modeling \cite{lo2002true}, such instances are called \emph{persuadables}, we introduce and adopt the term \textit{positive converted instance}. For instance, in marketing, the objective is to identify customers (i.e., the instances) who would not purchase (i.e., the negative outcome) without receiving a discount (i.e., the treatment), and who will purchase (i.e., the positive outcome) because of receiving a discount. Note that additionally, it may be equally important to identify instances that are adversely affected by the positive treatment, i.e., with $ITE = -1$. In marketing, for instance, some customers are known to adversely respond to retention campaigns, which are designed to retain customers but may also cause customer churn \cite{ascarza2018retention,devriendt2019you}. These customers are sometimes referred to as \emph{do-not-disturbs}, whereas we propose and adopt the general term \emph{negative converted instance}. Instances for which the outcome does not change because of the treatment will be called positive or negative nonconverted instances, depending on whether the outcome is positive or negative, respectively. In uplift modeling, these instances are sometimes referred to as \textit{sure things} and \textit{lost causes}, respectively.

\subsection{Causal profit}

To causally classify instances in the positive or negative treatment class based on the estimated ITE, i.e., to decide which treatment to apply to each instance, a threshold $\tau$ is required. Instances with $t \geq \tau$ are classified in the positive treatment class and with $t < \tau$ in the negative treatment class. The causal confusion matrix, $\mathbf{\dot{CF}}(\tau)$, is the causal equivalent of the $\mathbf{CF}$ matrix
and a function of the threshold $\tau$ that is adopted \cite{verbeke2020foundations}. It reports the proportion of instances in the treatment group with negative and positive outcomes and estimated ITE above the threshold, and the proportion of instances in the control group with negative and positive outcomes and estimated ITE below the threshold. The proportion of instances with $t < \tau$ and outcome $y$ are denoted by $C_y(\tau)$ for the control group and $T_y(\tau)$ for the treatment group.

\begin{equation} \label{eq:CE}
    \begin{blockarray}{cccccc}
        & \BAmulticolumn{2}{c}{\text{Treatment}} & & \\
        \addlinespace
        & W=0 & W=1 & & \\
        \addlinespace
        \begin{block}{c[cc]ccc}
            \topstrut \multirow{2}{*}{$\mathbf{\dot{CF}(\tau)}:=$} & \frac{C_0(\tau)}{C_0+C_1} & \frac{T_0-T_0(\tau)}{T_0+T_1} & \textit{Y}=0 & \multirow{2}{*}{Outcome}\\
            \addlinespace
            & \botstrut \frac{C_1(\tau)}{C_0+C_1}  & \frac{T_1-T_1(\tau)}{T_0+T_1}  & \textit{Y}=1\\
        \end{block}
    \end{blockarray}
\end{equation}
For assessing the effect of applying the positive treatment to the set of instances in the positive treatment class, relative to a baseline scenario of applying the negative treatment to all instances, the causal effect matrix, $\mathbf{\dot{E}(\tau)}$, is defined as the difference between the causal confusion matrix at threshold $\tau$ and the baseline confusion matrix for classifying all instances in the negative treatment class, $\mathbf{\dot{CF}(\infty)}$:
\begin{equation}
    \mathbf{\dot{E}} := \mathbf{\dot{CF}(\tau)} - \mathbf{\dot{CF}(\infty)}.
\end{equation}
Note that the baseline causal confusion matrix is obtained for threshold $\tau=\infty$.

Because we do not observe the ITE, we cannot evaluate the performance of causal classification models by comparing predictions against the ground truth, as we do in evaluating conventional classification models. Various approaches for evaluating causal models have been proposed in the literature, as surveyed in \cite{devriendt2020learning}. \cite{verbeke2020foundations} introduces a formalized framework for assessing the performance of causal classification models both in a cost-sensitive and cost-insensitive manner, which instantiates to a range of existing and novel performance measures for evaluating causal classification models. For arriving at a cost-sensitive evaluation, the framework defines the outcome-benefit matrix, $\mathbf{OB}$, and the treatment-cost matrix, $\mathbf{TC}$. The $\mathbf{OB}$ matrix specifies the benefit of an outcome given the applied treatment, with $b_{ij}$ the benefit of outcome $i$ when treatment $j$ is applied.

\begin{equation} \label{eq:OB}
    \begin{blockarray}{cccccc}
        & \BAmulticolumn{2}{c}{\text{Treatment}} & & \\
        \addlinespace
        & W=0 & W=1 & & \\
        \addlinespace
        \begin{block}{c[cc]ccc}
             \topstrut \multirow{2}{*}{$\mathbf{OB}:=$}  & b_{00} & b_{01} & Y=0 & \multirow{2}{*}{Outcome}\\
            \addlinespace
            & \botstrut b_{10} & b_{11} & Y=1\\
      \end{block}
    \end{blockarray}
\end{equation}

\noindent Similarly, the $\mathbf{TC}$ matrix specifies the cost, $c_{ij}$, of outcome $i$ given application of treatment $j$.

\begin{equation} \label{eq:AC}
    \begin{blockarray}{cccccc}
        & \BAmulticolumn{2}{c}{\text{Treatment}} & & \\
        \addlinespace
        & W=0 & W=1 & & \\
        \addlinespace
        \begin{block}{c[cc]ccc}
             \topstrut \multirow{2}{*}{$\mathbf{TC} :=$} & c_{00} & c_{01} & Y=0 & \multirow{2}{*}{Outcome}\\
            \addlinespace
            & \botstrut c_{10} & c_{11} & Y=1\\
      \end{block}
    \end{blockarray}
\end{equation}
\noindent The causal cost-benefit matrix, $\mathbf{\dot{CB}}$, is obtained by subtracting the treatment-cost matrix from the outcome-benefit matrix. It is the causal counterpart of the cost-benefit matrix in conventional classification, $\mathbf{CB}$:

\begin{equation}
\mathbf{\dot{CB}} = \mathbf{OB} - \mathbf{TC}.
\end{equation}

Note that, since by definition both benefits in $\mathbf{OB}$ and costs in $\mathbf{TC}$ are represented by positive values, an element $\dot{cb}_{ij}$ in the $\mathbf{\dot{CB}}$ matrix with a positive value represents a benefit and with a negative value a cost. The elements of the causal cost-benefit matrix directly relate to the elements of the causal confusion matrix, $\mathbf{\dot{CF}}$. Rather than using the $\mathbf{\dot{CB}}$ matrix, the decomposition in terms of the outcome-benefit and treatment-cost matrices offers practical advantages toward the analysis of the causal classification boundary in terms of its elementary constituents, i.e., the cost and benefit parameters, $c_{ij}$ and $b_{ij}$, as will be discussed in the following section.

The causal profit metric, $\dot{P}(\tau)$, evaluates the performance of a causal model in terms of the average profit per instance that is obtained when causally classifying instances using threshold $\tau$. $\dot{P}(\tau)$ is a function of the cost and benefit parameters, $c_{ij}$ and $b_{ij}$, and the elements of the causal effect matrix, $\dot{e}_{ij}$:  

\begin{align}\label{eq:profit}
    \dot{P}(\tau) &= \sum_{i,j}(\dot{\mathbf{E}}_{ij} \circ \dot{\mathbf{CB}}_{ij}), \nonumber \\
    &= \dot{e}_{00}\dot{cb}_{00} + \dot{e}_{01}\dot{cb}_{01} + \dot{e}_{10}\dot{cb}_{10} +  \dot{e}_{11}\dot{cb}_{11}, \nonumber \\
    &= \dot{e}_{00}(b_{00}-c_{00}) + \dot{e}_{01}(b_{01}-c_{01}) + \dot{e}_{10}(b_{10}-c_{10}) + \dot{e}_{11}(b_{11}-c_{11}).
\end{align}

Since $\mathbf{\dot{CF}}$ and thus $\mathbf{\dot{E}}$ are functions of the threshold $\tau$, the causal profit is a function of the threshold as well. Therefore, we will evaluate the average causal profit across all possible thresholds as well as the maximum causal profit at the optimal threshold in the experiments in Section \ref{sec:Experiments}. Both measures are defined as a function of the proportion of instances in the positive treatment class, $\eta$, which is equivalent with the use of a threshold. The average causal profit, $\dot{AP}$, is defined as:
\begin{equation}
    \dot{AP} = \int_{\eta} \dot{P}(\eta)d\eta.
\end{equation}
Note that visualizing the causal profit, $\dot{P}$, as a function of the proportion, $\eta$, yields the profit curve, with the average causal profit equal to the area underneath the curve. 

The maximum causal profit, $\dot{MP}$, is defined as:
\begin{align}\label{eq:max_profit_causal}
    \dot{MP} &:= \max_{\eta}\big(\dot{P}(\tau;\mathbf{OB},\mathbf{TC})\big), \nonumber \\
    &:= \dot{P}(\eta^*;\mathbf{OB},\mathbf{TC})
\end{align}
with $\eta^*$ the profit-maximizing threshold.

\section{Causal classification decision boundary}
\label{sec:proposal}

In this section, we introduce the cost-insensitive and cost-sensitive decision boundary for double binary causal classification, which are the causal counterparts of the cost-insensitive and cost-sensitive decision thresholds for conventional classification, respectively, as discussed in Section \ref{sec:Prior}. 

Note that, with regard to the adopted terminology, a decision threshold such as the classification threshold discussed in Section \ref{sec:Prior}, applies to a single variable that is considered in making a binary decision, whereas a decision boundary takes multiple variables into account and can be regarded as a multidimensional decision threshold.

\subsection{Cost-insensitive decision boundary}

Given the conditional class probability estimates of a causal classification model, $P_{11}$ and $P_{10}$, a decision boundary is needed to causally classify instances, i.e., to prescribe treatment $W=1$ or $W=0$. For establishing the optimal decision boundary, we first need to specify the objective that is to be optimized. 

In the cost-insensitive setting, we ignore the cost of treatments as specified in the $\mathbf{TC}$ matrix, as well as the benefits of outcomes as specified in the $\mathbf{OB}$ matrix. Implicitly, however, a positive outcome is acknowledged to have a larger benefit than a negative outcome and therefore is preferred. Hence, the straightforward objective is to maximize the positive outcome rate among the population, regardless of the total cost of treatment.

Given the estimates that are produced by a causal classification model, this objective will be achieved by applying the treatment to all instances with an estimated individual treatment effect that is positive. When the estimated individual treatment effect for an instance is negative, meaning that the treatment is expected to reduce the probability of obtaining a positive outcome for the instance, or when it is zero, then no treatment is to be applied. 

The optimal causal classification threshold, i.e., the cost-insensitive causal decision boundary, is found by conditioning the application of the treatment on the positive outcome probability given the positive treatment to be larger than the positive outcome probability given the negative treatment. Formally, instance $x_i$ is to be causally classified in class $W=1$, if:

\begin{equation*}
    P_{11,i} > P_{10,i},
\end{equation*}
i.e., if:
\begin{equation}
    t(x_i) > 0
\end{equation}

\noindent Hence, we obtain the cost-insensitive decision boundary for causal classification:

\begin{align} \label{eq:insensitive_optimal}
    \centering
    \tau^*_{ci} = 0
\end{align}

Figure \ref{fig:insensitive_frontier} visualizes the cost-insensitive decision boundary for causal classification as the horizontal axis $t = 0$ in the coordinate system $(P_{11},t)$. The positive treatment is prescribed for instances with a strictly positive value for $t$, which is visualized by the dotted area. The striped triangle B concerns instances with a negative estimated treatment effect, $t < 0$, and hence, the treatment is not to be applied. Note that the shaded triangles A and C correspond to infeasible combinations of $P_{11}$ and $t$; given that $t = P_{11} - P_{10}$ and $P_{10} \in [0,1]$, it follows that $t \leq P_{11}$ and $t \geq P_{11}-1$. 

\begin{figure}[ht]
 \centering
 \includegraphics[width=0.3\linewidth]{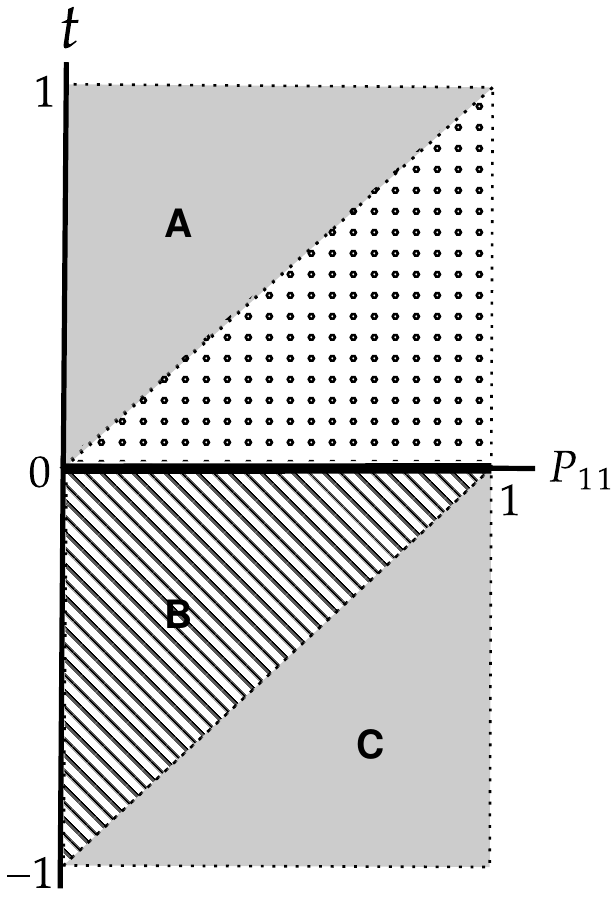}
 \caption{The cost-insensitive decision boundary for causal classification models corresponds to the horizontal axis $t = 0$ in the coordinate system $(P_{11},t)$. 
 }
 \label{fig:insensitive_frontier}
\end{figure}

The larger the estimated ITE is, the stronger the confidence of the causal classification model in predicting a positive treatment effect to occur, i.e., for the outcome to change from the negative to the positive class because of applying the positive treatment.
Hence, in case of a limited number of treatments that can be applied, e.g., because of a limited budget, it would make sense to rank and select instances from large to small estimated ITE. This ranking approach underlies visual evaluation measures for assessing the performance of causal classification measures, such as the Qini and liftup curve, as well as the associated performance metrics, i.e., the Qini coefficient and the p-percentile liftup measure (e.g., $5\%$ liftup). These curves visualize the cumulative effect of applying the treatment to an increasing proportion of the population, ranked from large to small estimated ITE, and thus support users in assessing the performance of a causal classification model at a threshold of interest for the particular application at hand, e.g., depending on the available budget. In Figure \ref{fig:insensitive_frontier}, a constraint on the number of instances that can be treated would yield a horizontal decision boundary at some value $t = t_c \geq 0$, resulting in a smaller dotted area. 

In case of a flexible budget or when the objective is to maximize net returns, users will intuitively balance the total cost of treatment, which is proportional to the selected proportion of instances, and the benefit of an increase in the number of positive outcomes, as visualized by these curves across all possible thresholds, in assessing performance and establishing the optimal threshold. Obviously, a more rigorous approach for establishing the optimal threshold and for maximizing the net returns is desirable, such as elaborated for conventional classification by \cite{elkan2001foundations} and discussed in Section \ref{sec:Prior}. Based on the foundational framework for cost-sensitive evaluation of causal classification model performance, as proposed in \cite{verbeke2020foundations}, we derive the optimal cost-sensitive decision boundary for causal classification in the following section.

\subsection{Cost-sensitive decision boundary}

Similar to the objective in finding the cost-sensitive decision threshold in conventional classification, as discussed in Section \ref{sec:Prior}, the objective in finding the cost-sensitive decision boundary in causal classification is to maximize the increase in expected profit because of applying the treatment to the subset of instances that are causally classified in the positive treatment class, which is denominated the expected causal profit and formally defined below in terms of the expected profit, $E(P|x,w)$, which is obtained for causally classifying instance $x$ in treatment class $W=w$.

\begin{definition}\label{def:expectedprofit}
The expected profit, $E(P|x,w)$, of causally classifying instance $x$ in treatment class $W=w$ is defined as:
\begin{equation}\label{eq:expectedprofit}
    E[P|x,w] := P_{1w} (b_{1w}-c_{1w}) + P_{0w} (b_{0w}-c_{0w}),
\end{equation}
with $P_{1w}+P_{0w}=1$ or $P_{0w} = 1-P_{1w}$
\end{definition} 

The expected profit, $E[P|x,w]$, is a function of the cost of the treatments, $c_{ij}$, and the benefit of the outcomes, $b_{ij}$, as specified in the treatment-cost and outcome-benefit matrices, respectively, as well as the treatment-conditional positive outcome probabilities, $P_{1w}$, estimated by the causal classification model at hand. As indicated in Section \ref{sec:Prior}, the calculation of expected profits requires calibrated probability estimates rather than scores.

\begin{definition}\label{def:expectedcausalprofit}
The expected causal profit, $E[\dot{P}|x]$, is the difference between the expected profit of classifying an instance in the positive treatment class minus the expected profit of classifying the instance in the negative treatment class, i.e.:
\begin{equation}\label{eq:expectedcausalprofit} 
    E[\dot{P}|x] := E[P|x,1] - E[P|x,0].
\end{equation}
\end{definition}

When establishing the objective in cost-sensitive causal classification as the maximization of the expected causal profit, then for each instance, the treatment $W=w$ is to be applied that maximizes the expected causal profit\footnote{We retrieved a single, earlier instantiation of Equation \ref{eq:expectedcausalprofit} in the literature, applied to in the case of customer retention, in \cite{provost2013data}, Chapter 11, page 283.}. Hence, in double binary causal classification, instances with a positive value for the expected causal profit are to be classified in the positive treatment class.

In other words, an instance is to be classified in the positive treatment class if and only if the expected profit of applying the positive treatment, $E[P|x,1]$, is greater than the expected profit of applying the negative treatment, $E[P|x,0]$, i.e., if:

\begin{equation}\label{eq:base_optimal}
    P_{11} (b_{11}-c_{11}) + (1-P_{11}) (b_{01}-c_{01}) > P_{10} (b_{10}-c_{10}) + (1-P_{10}) (b_{00}-c_{00}).
\end{equation}

The cost-sensitive causal classification decision boundary is therefore defined as the set of instances with a value for the expected causal profit equal to zero, i.e., with the expected profit for either causal classification to be equal:

\begin{equation}\label{eq:indifferent_optimal}
    P_{11} (b_{11}-c_{11}) + (1-P_{11}) (b_{01}-c_{01}) = P_{10} (b_{10}-c_{10}) + (1-P_{10}) (b_{00}-c_{00}),
\end{equation}
or, with the expected causal profit equal to zero:
\begin{equation}
E[\dot{P}|x] = 0.    
\end{equation}

By rearranging Equation \ref{eq:indifferent_optimal} and introducing the \textit{cost-benefit ratios} gamma and delta, we arrive at the following equation of the cost-sensitive causal classification decision boundary:

\begin{equation}\label{eq:ITE_threshold}
    \tau^*_{cs} = \gamma + \delta P_{11}
\end{equation}
With:
\begin{equation}\label{eq:gamma}
    \gamma = \frac{b_{00} - c_{00} - b_{01} + c_{01}}{b_{10} - c_{10} - b_{00} + c_{00}}
\end{equation}
\begin{equation}\label{eq:delta}
    \delta = \frac{b_{10} - c_{10} + b_{01} - c_{01} - b_{11} + c_{11} - b_{00} + c_{00}}{b_{10} - c_{10} - b_{00} + c_{00}}
\end{equation}    

Hence, treatment $W=1$ is to be prescribed if and only if the estimated ITE of an instance is greater than the minimum required $\tau^*_{cs}$, which is a linear function of the conditional probability $P_{11}$ and the cost-benefit ratios, $\gamma$ and $\delta$. The cost-sensitive causal classification decision boundary, defined by Equation \ref{eq:ITE_threshold}, represents the minimum individual treatment effect that is required for the application of the positive treatment to be optimal in terms of expected profit. It is a linear function of the probability to obtain a positive outcome when the positive treatment is applied and the cost-benefit structure of the problem in terms of the parameters $b_{ij}$ and $c_{ij}$. 

\begin{figure}[ht]
 \centering
 \subfigure[$\delta \in \mathbb{R}^{+}$]{\includegraphics[width=0.3\textwidth]{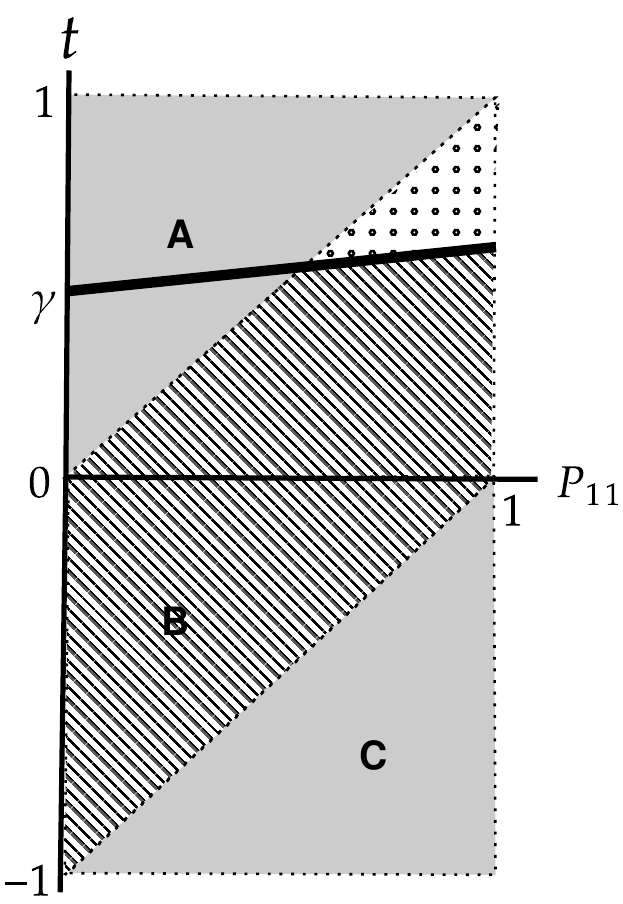}}
 \qquad
 \subfigure[$\delta \in \mathbb{R}^{-}$]{\includegraphics[width=0.293\textwidth]{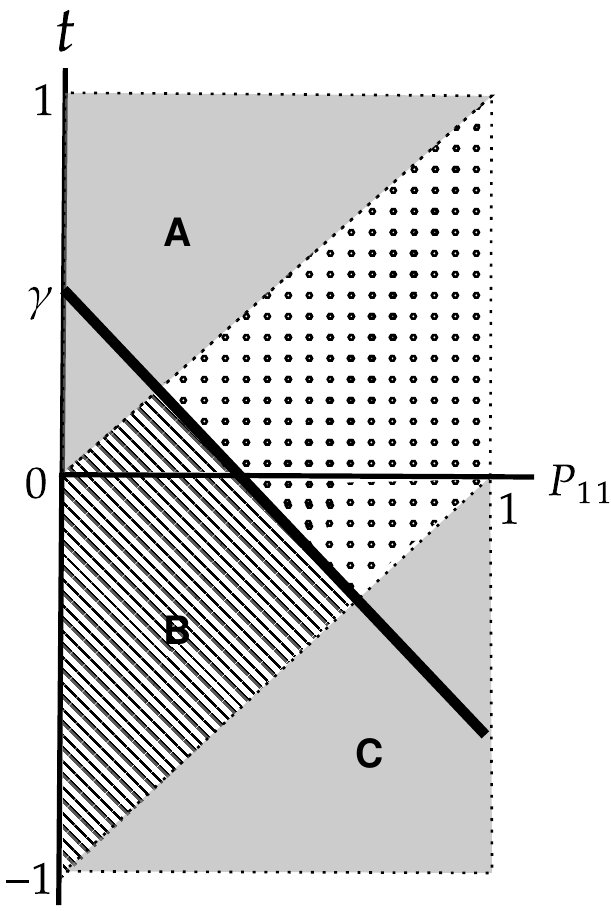}}
 \qquad
 \caption{The cost-sensitive causal classification decision boundary (solid line) with a positive (panel a) or negative (panel b) slope.}
 \label{fig:frontier}
\end{figure}

The cost-sensitive causal classification decision boundary is visualized in panels (a) and (b) of Figure \ref{fig:frontier}, for arbitrary values of the intercept $\gamma$ and a positive (left panel) or negative (right panel) value of the slope, $\delta$. In Section \ref{subsec:csdbanalysis} below, we will further investigate the sign and range of both $\gamma$ and $\delta$ under a set of assumptions for the cost and benefit parameters, reflecting real-world conditions in practical applications. First, we prove that Equation \ref{eq:ITE_threshold} generalizes upon the cost-insensitive causal classification decision boundary, as well as the cost-sensitive classification thresholds specified in Equation \ref{eq:cost_theoretical_threshold}, which itself entails the cost-insensitive classification threshold of Equation \ref{eq:costinsensitiveclassification}.

\begin{proposition}\label{prop:costinsenscausal}
The cost-insensitive causal classification threshold, $\tau_{ci}^*=0$, is obtained for $\mathbf{OB}=0$ and $\mathbf{TC}=0$.
\end{proposition}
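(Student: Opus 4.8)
The plan is to obtain $\tau^*_{ci}$ as a degenerate special case of the cost-sensitive boundary $\tau^*_{cs} = \gamma + \delta P_{11}$ of Equation~\ref{eq:ITE_threshold}. First I would substitute $b_{ij}=0$ and $c_{ij}=0$ for all $i,j$ into the closed forms for the intercept $\gamma$ (Equation~\ref{eq:gamma}) and the slope $\delta$ (Equation~\ref{eq:delta}). Each of these is a ratio whose numerator is an alternating sum of the eight parameters $b_{ij}$ and $c_{ij}$; with every parameter set to zero, both numerators collapse termwise to zero. Propagating $\gamma = 0$ and $\delta = 0$ into Equation~\ref{eq:ITE_threshold} then yields $\tau^*_{cs} = 0 + 0\cdot P_{11} = 0$, which is exactly the cost-insensitive boundary $\tau^*_{ci}=0$ of Equation~\ref{eq:insensitive_optimal}.

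The step I expect to be the main obstacle is that the common denominator $b_{10}-c_{10}-b_{00}+c_{00}$ shared by $\gamma$ and $\delta$ also vanishes when $\mathbf{OB}=0$ and $\mathbf{TC}=0$, so the naive substitution is formally of the indeterminate type $0/0$ and the line $\tau = \gamma + \delta P_{11}$ is not immediately well defined. Indeed, reading the boundary directly off the indifference condition of Equation~\ref{eq:indifferent_optimal} (equivalently $E[\dot{P}|x]=0$) with all $b_{ij}=c_{ij}=0$ gives the trivial identity $0=0$, satisfied by every instance, confirming that it is the division by this denominator that produces a genuine line. To make the argument rigorous I would therefore not divide, but instead return to the primitive expected-profit inequality of Equation~\ref{eq:base_optimal} and track the defining feature of the cost-insensitive regime explicitly.

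Concretely, I would encode the cost-insensitive objective of maximizing the positive-outcome rate by the degenerate but nonzero cost-benefit structure in which a positive outcome carries a common benefit $b>0$ irrespective of treatment and a negative outcome carries none, with all treatment costs ignored, i.e. $b_{10}=b_{11}=b$, $b_{01}=b_{00}=0$ and $c_{ij}=0$. Substituting this into Equation~\ref{eq:base_optimal} reduces the decision rule to $P_{11}\,b > P_{10}\,b$, hence to $P_{11}>P_{10}$ and, via $t=P_{11}-P_{10}$ from Equation~\ref{eq:estimated_ITE}, to $t>0$, so $\tau^*_{ci}=0$. Because $\gamma$ and $\delta$ are homogeneous of degree zero in the parameters, this conclusion is invariant to the scale $b$: one checks that $\gamma=\delta=0$ for every $b>0$ along this family, with the numerators of $\gamma$ and $\delta$ remaining identically zero throughout, so letting $b\to 0$ recovers the stated limit $\mathbf{OB}=0$, $\mathbf{TC}=0$. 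This shows the indeterminacy is removable along the cost-insensitive direction and that the limiting boundary is precisely $\tau^*_{ci}=0$.
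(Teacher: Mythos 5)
Your proposal is correct, and it is in fact more careful than the paper's own proof, which is precisely the one-line substitution you flag as problematic: the paper states only that the proposition ``follows from Equations \ref{eq:gamma} and \ref{eq:delta}, with $\mathbf{OB}=0$ and $\mathbf{TC}=0$,'' without addressing that the common denominator $b_{10}-c_{10}-b_{00}+c_{00}$ vanishes along with the numerators, so that $\gamma$ and $\delta$ are formally $0/0$ and Equation \ref{eq:ITE_threshold} is not well defined at that point. You correctly observe that at exactly $\mathbf{OB}=\mathbf{TC}=0$ the indifference condition of Equation \ref{eq:indifferent_optimal} degenerates to $0=0$, i.e.\ every instance is indifferent between treatments, so the proposition can only be read as a statement about the cost-insensitive \emph{preference structure} rather than a pointwise evaluation at the zero matrices. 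Your repair --- returning to the primitive inequality of Equation \ref{eq:base_optimal} with the treatment-independent-benefit family $b_{11}=b_{10}=b>0$, $b_{01}=b_{00}=0$, $c_{ij}=0$, along which $\gamma=\delta=0$ identically and the decision rule reduces to $P_{11}>P_{10}$, i.e.\ $t>0$ --- is rigorous, and it matches the paper's own informal characterization of the cost-insensitive setting (a positive outcome is implicitly preferred, equally under either treatment, with treatment costs ignored). Your caveat that the indeterminacy is removable only ``along the cost-insensitive direction'' is essential and worth keeping explicit, because the limit of $(\gamma,\delta)$ at the origin is direction-dependent: for instance, taking $c_{01}=b_{10}=\epsilon$ with all other parameters zero gives $\gamma=1$ and $\delta=0$, hence the boundary $\tau=1$, for every $\epsilon>0$. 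In short, the paper's route buys brevity at the cost of a genuine $0/0$ gap; your route buys rigor and the correct interpretation, at the cost of having to choose and justify the particular parameter family that encodes cost-insensitivity.
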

The proof of Proposition \ref{prop:costinsenscausal} follows from Equations \ref{eq:gamma} and \ref{eq:delta}, with $\mathbf{OB}=0$ and $\mathbf{TC}=0$.

\begin{theorem}\label{theorem:csthreshold}
The cost-sensitive classification threshold, defined by Equation \ref{eq:cost_theoretical_threshold}, is a specific case of the cost-sensitive causal classification decision boundary, defined by Equation \ref{eq:ITE_threshold}.
\end{theorem}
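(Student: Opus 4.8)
The plan is to exhibit conventional cost-sensitive classification as the degenerate special case of causal classification in which the chosen action does not causally affect the outcome. Concretely, I would make two identifications. First, I would match the cost-benefit structures by choosing $\mathbf{OB}$ and $\mathbf{TC}$ so that the causal net cost-benefit matrix coincides with the conventional one, i.e.\ $\dot{cb}_{ij} = b_{ij} - c_{ij} = cb_{ij}$ for all $i,j$ (for instance $\mathbf{TC}=0$ with $\mathbf{OB}=\mathbf{CB}$ on the nonnegative entries, splitting signs as needed). Second, I would encode the defining feature of conventional classification, namely that predicting a label does not change the true label, by requiring the treatment to leave the outcome distribution unchanged, $P_{11}=P_{10}=P_1$, so that the estimated ITE collapses to $t \equiv 0$. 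Under these identifications the two-dimensional causal decision rule should restrict to a one-dimensional threshold on $P_{11}=P_1$.

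For the core argument I would work directly on the boundary equation \ref{eq:ITE_threshold}. The cost-sensitive causal decision boundary is the locus $t = \gamma + \delta P_{11}$, i.e.\ the set where $E[\dot{P}|x]=0$. Intersecting this boundary with the conventional slice $t=0$ (where, as above, the action is outcome-neutral) yields $0 = \gamma + \delta P_{11}$, so the crossover probability is $P_{11} = -\gamma/\delta$. It then remains to substitute the definitions \ref{eq:gamma} and \ref{eq:delta} with $\dot{cb}_{ij}=cb_{ij}$ and simplify: the common denominator $b_{10}-c_{10}-b_{00}+c_{00} = cb_{10}-cb_{00}$ cancels, and a short rearrangement gives $-\gamma/\delta = (cb_{00}-cb_{01})/(cb_{11}+cb_{00}-cb_{01}-cb_{10})$, which is exactly $\phi^*_{cs}$ of Equation \ref{eq:cost_theoretical_threshold}. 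As a consistency check I would also reduce the governing inequality \ref{eq:base_optimal} under the same identifications: with $P_{11}=P_{10}=P_1$ and $\dot{cb}_{ij}=cb_{ij}$ it collapses term by term to $P_1 cb_{11} + (1-P_1)cb_{01} > P_1 cb_{10} + (1-P_1)cb_{00}$, i.e.\ precisely the conventional optimality condition \ref{eq:optimal_threshold_classification}, confirming that the decision rules, and hence their thresholds, coincide.

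The calculations themselves are routine; the main obstacle is conceptual rather than computational. The statement only makes sense once one pins down in what precise sense a threshold on a single probability $P_1$ can be a special case of a boundary living in the $(P_{11},t)$ plane, and the answer is the pair of constraints above: the cost-benefit identification $\dot{cb}_{ij}=cb_{ij}$ together with the outcome-neutrality $P_{11}=P_{10}$ that pins the problem to the line $t=0$. A secondary subtlety is sign bookkeeping, since recovering the correct inequality direction when passing between \ref{eq:base_optimal} and the thresholded form requires tracking the sign of the denominator $cb_{10}-cb_{00}$; routing the argument through the equality $E[\dot{P}|x]=0$ of Equation \ref{eq:indifferent_optimal} sidesteps this cleanly. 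Geometrically, the upshot is that $\phi^*_{cs}$ is simply the abscissa at which the cost-sensitive causal decision boundary crosses the cost-insensitive axis $t=0$.
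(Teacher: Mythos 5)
Your proposal is correct and takes essentially the same route as the paper's proof: both reduce conventional classification to the slice $t=0$ via the outcome-neutrality of the predicted label ($P_{11}=P_{10}$), solve the boundary equation \ref{eq:ITE_threshold} for the crossover probability $P_{11}^{*}=-\gamma/\delta$, and simplify under the identification $\mathbf{\dot{CB}}=\mathbf{CB}$ to recover exactly $\phi^{*}_{cs}$ of Equation \ref{eq:cost_theoretical_threshold}. Your supplementary consistency check, collapsing inequality \ref{eq:base_optimal} to the conventional condition \ref{eq:optimal_threshold_classification}, is a small addition the paper does not include, but the core argument is identical.
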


\begin{proof}
In conventional classification, instances are classified in predicted outcome classes, whereas in causal classification they are classified in treatment classes. In conventional classification, the same treatment is applied to both predicted classes, i.e., $W=0$. 
Hence, a positive outcome value is equally likely, regardless of the treatment or predicted class, i.e., $P_{11}=P_{10}$ and $P_{01}=P_{00}$. 
As a result, $t = P_{11} - P_{10} = 0$ for all instances.

For $t=0$ and rearranging the terms in Equation \ref{eq:ITE_threshold}, we obtain the following equation for the cost-sensitive causal classification decision boundary in terms of the minimum required positive outcome probability, $P^{*}_{11}$, for an instance to be classified in the positive class:

\begin{align*}
    P^{*}_{11} &= -\frac{\gamma}{\delta}, \\ 
    &=  - \frac{b_{00}-c_{00}-b_{01}+c_{01}}{b_{10}-c_{10}+b_{01}-c_{01}-b_{11}+c_{11}-b_{00}+c_{00}},  \\
    &= \frac{\dot{cb_{00}}-\dot{cb_{01}}}{\dot{cb_{11}}+\dot{cb_{00}}-\dot{cb_{01}}-\dot{cb_{10}}}.
\end{align*}

Hence, with $\mathbf{\dot{CB}} = \mathbf{CB}$ and $\phi^*_{cs}$ as defined in Equation \ref{eq:cost_theoretical_threshold} of the cost-sensitive threshold for classification tasks, we find:
\begin{equation}
    P^{*}_{11} = \phi^*_{cs}.
\end{equation}
\end{proof}

Here, the advantage of the notation that was introduced in Section \ref{sec:Prior} for the cost-benefit matrix in conventional classification becomes apparent, which is unusual in the field of classification but aligns with the specification of the causal cost-benefit matrix. 

The importance of Theorem \ref{theorem:csthreshold} is substantial since it adds to the establishment of conventional classification as a specific case of causal classification, as it has been positioned in \cite{verbeke2020foundations} in terms of evaluation measures. As a result, the cost-benefit matrix for conventional classification is to be regarded as the difference between a cost and benefit matrix, as is generally the case in causal classification.

\subsection{Analysis of the cost-sensitive decision boundary}\label{subsec:csdbanalysis}
In this section, we present an analysis of the cost-sensitive causal decision boundary. To this end, we introduce the feasible set, the positive treatment set and the negative treatment set.

\begin{definition}\label{def:feasset}
The \textbf{feasible set}, $\mathcal{F}$ is defined as the set of feasible combinations of the positive outcome probability for the positive treatment, $P_{11}$, and the individual treatment effect, $t$: 
\begin{equation}
    \mathcal{F} = \{(P_{11},t) | P_{11} \in [0,1], P_{11} - 1 \leq t \leq P_{11}\}
\end{equation}
\end{definition}
The feasible set is visualized in panels (a) and (b) of Figure \ref{fig:frontier} by the striped area B and the dotted area. It is bounded by $P_{11}=0$ on the left and $P_{11}=1$ on the right, and by the diagonals $t = P_{11}$ from above and $t = -1 + P_{11}$ from below. The gray areas A and C correspond to infeasible combinations of $t$ and $P_{11}$. 

\begin{definition}\label{def:postreatset}
The \textbf{positive treatment set}, $\mathcal{P}$, is defined as the subset of the feasible set, $\mathcal{F}$,  with $t > \tau^*_{cs}$.
\end{definition}

The positive treatment set corresponds to the dotted area in panels (a) and (b) of Figure \ref{fig:frontier}. 
The positive treatment set contains all combinations of $P_{11},t$ that are to be causally classified in the positive treatment class, $W=1$, under the cost-benefit structure $\mathbf{\dot{CB}}$ and with the objective of maximizing the expected causal profit. 

\begin{definition}\label{def:negtreatset}
The \textbf{negative treatment set}, $\mathcal{N}$, is defined as the subset of the feasible set, $\mathcal{F}$,  with $t \leq \tau^*_{cs}$.
\end{definition}

The negative treatment set corresponds to the striped area B in panels (a) and (b) of Figure \ref{fig:frontier}.

The negative treatment set contains all cases with $t \leq \tau^*_{cs}$, i.e., combinations of $t$ and $P_{11}$ for which the optimal causal classification is $W=0$. For instances in this area, the expected benefit does not outweigh the expected cost of applying the positive treatment. 

\begin{proposition}\label{prop:feasiblesetunion}
The \textbf{feasible set}, $\mathcal{F}$ is the union of the positive and negative treatment set:
    \begin{equation}
        \mathcal{F} = \mathcal{P} \cup \mathcal{N}
    \end{equation}
\end{proposition}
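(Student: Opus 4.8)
The plan is to prove the claimed equality by establishing the two set inclusions $\mathcal{P} \cup \mathcal{N} \subseteq \mathcal{F}$ and $\mathcal{F} \subseteq \mathcal{P} \cup \mathcal{N}$ separately. The first inclusion is immediate from Definitions \ref{def:postreatset} and \ref{def:negtreatset}: both $\mathcal{P}$ and $\mathcal{N}$ are, by construction, defined as subsets of the feasible set $\mathcal{F}$, so their union is contained in $\mathcal{F}$ as well. No computation is needed here.

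For the reverse inclusion, I would take an arbitrary element $(P_{11},t) \in \mathcal{F}$ and appeal to the totality of the order on $\mathbb{R}$. Since $\tau^*_{cs} = \gamma + \delta P_{11}$ is, by Equation \ref{eq:ITE_threshold}, a well-defined real number once $P_{11}$ is fixed, exactly one of the two mutually exclusive and jointly exhaustive cases $t > \tau^*_{cs}$ or $t \leq \tau^*_{cs}$ holds. In the first case the point meets the membership condition of $\mathcal{P}$ (Definition \ref{def:postreatset}); in the second case it meets that of $\mathcal{N}$ (Definition \ref{def:negtreatset}). In either event $(P_{11},t) \in \mathcal{P} \cup \mathcal{N}$, which yields $\mathcal{F} \subseteq \mathcal{P} \cup \mathcal{N}$ and hence the desired equality.

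The argument reduces to a dichotomy on the sign of $t - \tau^*_{cs}$, so I expect no genuine obstacle. The only subtlety worth flagging is that $\tau^*_{cs}$ is not a fixed constant but varies with $P_{11}$ along the line of Equation \ref{eq:ITE_threshold}; this, however, does not affect the reasoning, because for each individual feasible point the comparison of its coordinate $t$ against the corresponding threshold value $\gamma + \delta P_{11}$ is still a strict dichotomy. As a by-product, the very same case split shows that $\mathcal{P} \cap \mathcal{N} = \emptyset$, so $\{\mathcal{P},\mathcal{N}\}$ in fact forms a partition of $\mathcal{F}$, even though only the union is required for the statement of the proposition.
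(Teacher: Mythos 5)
Your proof is correct and takes essentially the same route as the paper, which simply notes that the result ``follows directly from Definitions \ref{def:feasset}, \ref{def:postreatset} and \ref{def:negtreatset}''; your write-up merely makes explicit the two inclusions and the dichotomy $t > \tau^*_{cs}$ versus $t \leq \tau^*_{cs}$ that the paper leaves implicit. Your added observation that $\mathcal{P}$ and $\mathcal{N}$ in fact partition $\mathcal{F}$ is a correct (if unneeded) bonus.
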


The proof of Proposition \ref{prop:feasiblesetunion} follows directly from Definitions \ref{def:feasset}, \ref{def:postreatset} and \ref{def:negtreatset}.

To accommodate the analysis of the decision boundary and without loss of generality, we simplify the expression of the intercept, $\gamma$, and the slope, $\delta$, by setting the cost of the negative treatment and the benefit of a negative outcome equal to zero, $c_{00}=c_{10}=0$ and $b_{00} = b_{01} = 0$, and expressing the cost of a positive treatment and the benefit of a positive outcome relative to a negative outcome when the negative treatment is applied. This reflects an assessment of the increase in profit that results from applying the positive treatment for a selected subset of instances, compared to the profit that results from applying the negative or no treatment to all instances, which is exactly the definition of the causal profit. Hence, in terms of impact on causal profit, $\gamma$ and $\delta$, the effect of this operation on causal profit is null. We obtain the following causal cost-benefit matrix, with $\dot{cb}_{00} = 0$, i.e., the baseline outcome yielding a net profit equal to zero:

\begin{equation}
    \begin{blockarray}{cccccc}
        & \BAmulticolumn{2}{c}{\text{Treatment}} & & \\
        \addlinespace
        & W=0 & W=1 & & \\
        \addlinespace
        \begin{block}{c[cc]ccc}
             \topstrut \multirow{2}{*}{$\mathbf{\dot{CB}}:=$}  & 0 & -c_{01} & Y=0 & \multirow{2}{*}{Outcome}\\
            \addlinespace
            & \botstrut b_{10} & b_{11}-c_{11} & Y=1\\
      \end{block}
    \end{blockarray}
\end{equation}


As a result, parameter $\gamma$ simplifies to:
\begin{equation}\label{eq:simplegamma}
\gamma' = \frac{c_{01}}{b_{10}}.
\end{equation}
\begin{lemma}\label{lemma:gammapos}
The intercept of the cost-sensitive causal decision boundary is nonnegative:
\begin{equation}
    \gamma' \in \mathbb{R}_{\geq 0}
\end{equation}
\end{lemma}
\begin{proof}
$\gamma'$ is defined as the ratio of $c_{01}$ and $b_{10}$, which by definition, are nonnegative numbers. 
\end{proof}

On the other hand, parameter $\delta$ simplifies to:
\begin{equation}\label{eq:simpledelta}
\delta' = \frac{b_{10} - b_{11} + c_{11} - c_{01}}{b_{10}}.     
\end{equation}

For characterizing the cost-sensitive causal classification decision boundary, we investigate whether the positive treatment set can be empty and whether it can contain instances with a negative estimate for the individual treatment effect. 

\begin{theorem}\label{theorem:postreatempty}
The positive treatment set is nonempty if and only if the causal profitability condition is fulfilled, i.e., if the benefit of a positive outcome given that the positive treatment was applied is larger than the cost of applying the positive treatment when a positive outcome is obtained, $c_{11} < b_{11}$. 
\end{theorem}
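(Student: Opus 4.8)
The plan is to reduce the (non)emptiness of the positive treatment set $\mathcal{P}$ to a one-variable inequality and then read off the profitability condition from the endpoints of the unit interval. The starting point is geometric: by Definition~\ref{def:postreatset}, $\mathcal{P}$ consists of the feasible points lying strictly above the boundary $t = \tau^*_{cs} = \gamma' + \delta' P_{11}$. Since for a fixed $P_{11}$ the feasible values of $t$ form the interval $[P_{11}-1,\,P_{11}]$ by Definition~\ref{def:feasset}, the largest available $t$ is attained on the upper edge $t = P_{11}$. Hence $\mathcal{P} \neq \emptyset$ if and only if this upper edge rises strictly above the decision boundary for some $P_{11} \in [0,1]$, i.e. if and only if there exists $P_{11} \in [0,1]$ with $P_{11} > \gamma' + \delta' P_{11}$.

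First I would introduce the gap function $g(P_{11}) := P_{11} - (\gamma' + \delta' P_{11}) = (1-\delta')P_{11} - \gamma'$, which is affine in $P_{11}$, so that $\mathcal{P} \neq \emptyset$ is equivalent to $\max_{P_{11}\in[0,1]} g(P_{11}) > 0$. Next I would evaluate $g$ at the two endpoints. At $P_{11}=0$ we get $g(0) = -\gamma' = -c_{01}/b_{10} \leq 0$, using Lemma~\ref{lemma:gammapos} together with $b_{10}>0$. At $P_{11}=1$, substituting the simplified slope from Equation~\ref{eq:simpledelta}, the term $\gamma'=c_{01}/b_{10}$ cancels and one obtains $g(1) = 1 - \gamma' - \delta' = (b_{11}-c_{11})/b_{10}$.

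The decisive step is then a linearity argument. Because $g$ is affine with $g(0) \leq 0$, it takes a strictly positive value somewhere on $[0,1]$ if and only if $g(1) > 0$: if $g(1) \leq 0$ both endpoints are nonpositive and affineness forces $g \leq 0$ throughout the interval, whereas if $g(1) > 0$ then $g > 0$ in a neighbourhood of $P_{11}=1$. Since $b_{10} > 0$, the sign of $g(1)$ coincides with the sign of $b_{11}-c_{11}$, and therefore $\mathcal{P} \neq \emptyset \iff b_{11}-c_{11} > 0 \iff c_{11} < b_{11}$, which is exactly the causal profitability condition.

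I expect the main obstacle to be the reduction itself rather than the algebra: one must justify that restricting attention to the upper edge $t = P_{11}$ loses nothing, so that the two-variable feasibility question collapses to the single-variable inequality above, and one must handle the strict inequality in the definition of $\mathcal{P}$ carefully, keeping track of the degenerate case $g(0)=0$ (which occurs only when $c_{01}=0$) so that no spurious nonempty set arises there. The endpoint evaluation $g(1) = (b_{11}-c_{11})/b_{10}$ and the cancellation against $\gamma'$ are routine once the simplified parameters from Equations~\ref{eq:simplegamma} and~\ref{eq:simpledelta} are inserted.
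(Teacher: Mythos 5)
Your proof is correct and takes essentially the same route as the paper's: both reduce nonemptiness of $\mathcal{P}$ to comparing the decision boundary against the upper edge $t = P_{11}$ of the feasible set, invoke Lemma~\ref{lemma:gammapos} to anchor the endpoint at $P_{11}=0$, and conclude by evaluating at $P_{11}=1$, where $\tau^*_{cs} = (b_{10}-b_{11}+c_{11})/b_{10} < 1$ if and only if $c_{11} < b_{11}$. Your affine gap-function argument merely makes rigorous the linearity step that the paper justifies ``visually.''
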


\begin{proof}
Visually, for the positive treatment set to be empty, the decision boundary would be fully above the upper bound of the feasible set, i.e., the diagonal $t = P_{11}$.

Since the intercept of the decision boundary is nonnegative, $\gamma' \in \mathbb{R}_{\geq 0}$, as established by Lemma \ref{lemma:gammapos}, the positive treatment set is nonempty if and only if $\tau^*_{cs} < 1$ for $P_{11}=1$.

Elaborating Equation \ref{eq:ITE_threshold}, for $P_{11}=1$ and replacing $\gamma'$ with Equation \ref{eq:simplegamma} and $\delta'$ with Equation \ref{eq:simpledelta}:
\begin{align*}
    \tau^*_{cs}  &= \gamma'+\delta' P_{11} \\
            &= \gamma'+\delta', \\
            &= \frac{c_{01}}{b_{10}} + \frac{b_{10} - b_{11} + c_{11} - c_{01}}{b_{10}},\\
            &= \frac{b_{10} - b_{11} + c_{11}}{b_{10}},
\end{align*}
which is smaller than one if and only if the numerator is smaller than the denominator, or:
\begin{align*}
    b_{10} - b_{11} + c_{11} < b_{10}, \\
    - b_{11} + c_{11} < 0, \\
    c_{11} < b_{11}.
\end{align*}
which is denominated the causal profitability condition. 
\end{proof}

From the above proof, note that 
in the case of a positive outcome, the closer the cost of the positive treatment ($c_{11}$) approaches the benefit of a positive outcome ($b_{11}$), the closer will $\tau^*_{cs}$ approach one. Hence, the smaller the net return of a positive converted instance is, the larger the required estimated ITE for the expected return to be positive and for classifying an instance in the positive treatment class. Intuitively, it makes sense that the positive treatment should be applied to fewer instances when the net profit of a positive converted instance is smaller. Visually, the area representing the positive treatment set becomes smaller for a larger slope of the decision boundary, for any (positive) value of the intercept.

As a corollary of Theorem \ref{theorem:postreatempty}, we have:
\begin{corollary}\label{theorem:postreatemptyinv}
The positive treatment set is empty if the cost of a positive converted instance is larger than the benefit of a positive converted instance:
If $c_{11} > b_{11}$ then $\mathcal{P} = \emptyset$. 
\end{corollary}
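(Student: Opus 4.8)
The plan is to obtain this corollary directly from the biconditional already established in Theorem~\ref{theorem:postreatempty}. That theorem proves that the positive treatment set is nonempty if and only if $c_{11} < b_{11}$. Negating both sides of this equivalence yields that $\mathcal{P} = \emptyset$ if and only if $\neg(c_{11} < b_{11})$, i.e., if and only if $c_{11} \geq b_{11}$. Since the hypothesis $c_{11} > b_{11}$ is a strict special case of $c_{11} \geq b_{11}$, the conclusion $\mathcal{P} = \emptyset$ follows immediately. Thus the corollary requires no new machinery: it is simply the contrapositive direction of the theorem, restricted to the strict inequality.

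As a self-contained alternative, I would verify the claim geometrically, reusing the computation inside the proof of Theorem~\ref{theorem:postreatempty}. Recall that, by Definition~\ref{def:postreatset}, membership in $\mathcal{P}$ requires a feasible point with $t > \tau^*_{cs}(P_{11})$, while feasibility imposes $t \leq P_{11}$; hence $\mathcal{P} \neq \emptyset$ demands $\tau^*_{cs}(P_{11}) < P_{11}$ for some $P_{11} \in [0,1]$. Using Equation~\ref{eq:ITE_threshold}, the difference $\tau^*_{cs}(P_{11}) - P_{11} = \gamma' + (\delta' - 1)P_{11}$ is an affine function of $P_{11}$, so on $[0,1]$ it attains its minimum at an endpoint. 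At $P_{11}=0$ its value is $\gamma' \geq 0$ by Lemma~\ref{lemma:gammapos}, and at $P_{11}=1$ its value is $\tau^*_{cs}(1) - 1 = (c_{11} - b_{11})/b_{10}$, which is strictly positive precisely when $c_{11} > b_{11}$. With both endpoint values nonnegative, the affine difference is nonnegative throughout $[0,1]$, so no feasible point satisfies $t > \tau^*_{cs}$, and therefore $\mathcal{P} = \emptyset$.

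I expect no genuine obstacle here, since the statement is a logical consequence of an already-proved equivalence. The only point that deserves care is the role of the strict inequality: the corollary is stated for $c_{11} > b_{11}$ rather than for $c_{11} \geq b_{11}$, which sidesteps the boundary case $c_{11} = b_{11}$. In that case $\tau^*_{cs}(1) = 1$ coincides with the upper diagonal $t = P_{11}$ of the feasible set, and whether $\mathcal{P}$ is empty then hinges on the strict-versus-weak convention adopted in Definition~\ref{def:postreatset}. Because the hypothesis excludes equality, this subtlety does not arise, and the proof reduces to the one-line deduction from Theorem~\ref{theorem:postreatempty}.
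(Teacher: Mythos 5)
Your proof is correct and takes essentially the same route as the paper: the paper likewise presents this statement as an immediate logical consequence (the contrapositive) of the biconditional established in Theorem~\ref{theorem:postreatempty}, with no additional argument beyond an interpretive remark. Your self-contained geometric alternative simply re-runs the endpoint computation already contained in that theorem's proof, so it does not constitute a genuinely different approach, and your side remark on the boundary case $c_{11}=b_{11}$ is moot since Definition~\ref{def:postreatset} uses a strict inequality.
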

If the causal profitability condition is not satisfied, then a positive converted instance yields a loss and no causal profit can be obtained from applying the positive treatment.

\begin{theorem}\label{theorem:postreatpos}
The positive treatment set contains pairs $(P_{11},t)$ with $t<0$, if and only if the causal bonus condition is satisfied, i.e., if the cost of a positive treatment for a positive outcome is smaller than the difference between the benefit of a positive outcome for a positive and a negative treatment (i.e., the bonus), $c_{11} < b_{11}-b_{10}$. 
\end{theorem}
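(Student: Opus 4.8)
The plan is to translate the geometric statement — that the positive treatment set $\mathcal{P}$ reaches below the horizontal axis $t=0$ — into a single algebraic condition on the intercept and slope of the decision boundary, and then invoke the computation already carried out in the proof of Theorem \ref{theorem:postreatempty}.

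First I would characterize the portion of the feasible set lying below the horizontal axis. By Definition \ref{def:feasset}, a pair with $t<0$ is feasible only when $P_{11}-1 \leq t < 0$ with $P_{11}\in[0,1)$; in particular $P_{11}=1$ forces $t\geq 0$, so the negative-$t$ part of $\mathcal{F}$ is the triangle with vertices $(0,0)$, $(0,-1)$ and $(1,0)$, open along its top edge. By Definition \ref{def:postreatset}, a point of this triangle lies in $\mathcal{P}$ precisely when $t>\tau^*_{cs}$. Fixing $P_{11}\in[0,1)$, a feasible $t$ with $\tau^*_{cs}<t<0$ exists if and only if $\max(\tau^*_{cs},\,P_{11}-1)<0$; since $P_{11}-1<0$ holds automatically on $[0,1)$ and $t<0\leq P_{11}$ makes the upper feasibility bound inactive, this reduces to $\tau^*_{cs}<0$. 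Hence $\mathcal{P}$ contains a pair with $t<0$ if and only if there exists $P_{11}\in[0,1)$ with $\gamma'+\delta' P_{11}<0$.

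Next I would exploit Lemma \ref{lemma:gammapos}, which gives $\gamma'\geq 0$, so the boundary is nonnegative at $P_{11}=0$ and can dip below zero only for a negative slope. Since $\tau^*_{cs}=\gamma'+\delta' P_{11}$ is linear in $P_{11}$, its infimum over the half-open interval $[0,1)$ is approached as $P_{11}\to 1$, with limiting value $\gamma'+\delta'$. By continuity, some $P_{11}<1$ satisfies $\tau^*_{cs}<0$ if and only if $\gamma'+\delta'<0$; conversely, if $\gamma'+\delta'\geq 0$ then $\tau^*_{cs}\geq 0$ throughout $[0,1)$ (either $\delta'\geq 0$, in which case the minimum is $\gamma'\geq 0$, or $\delta'<0$, in which case its infimum over $[0,1)$ is the limiting value $\gamma'+\delta'\geq 0$). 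This collapses the characterization to the single inequality $\gamma'+\delta'<0$.

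Finally I would substitute the value $\gamma'+\delta' = (b_{10}-b_{11}+c_{11})/b_{10}$ already derived in the proof of Theorem \ref{theorem:postreatempty}; since $b_{10}>0$, the inequality $\gamma'+\delta'<0$ is equivalent to $b_{10}-b_{11}+c_{11}<0$, i.e.\ $c_{11}<b_{11}-b_{10}$, which is exactly the causal bonus condition. The main obstacle I anticipate is the careful bookkeeping at the right endpoint $P_{11}=1$: the feasible region pinches to the single point $(1,0)$ there, so the relevant infimum of $\tau^*_{cs}$ is not attained, and both directions of the equivalence must be argued through linearity and continuity rather than by evaluating at $P_{11}=1$ directly.
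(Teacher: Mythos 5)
Your proposal is correct and follows essentially the same route as the paper: invoke Lemma \ref{lemma:gammapos} to pin the intercept, reduce the question to whether the linear boundary $\tau^*_{cs}=\gamma'+\delta' P_{11}$ is negative near $P_{11}=1$, and reuse the value $\gamma'+\delta'=(b_{10}-b_{11}+c_{11})/b_{10}$ computed in the proof of Theorem \ref{theorem:postreatempty} to obtain $c_{11}<b_{11}-b_{10}$. The only difference is that you treat the endpoint $P_{11}=1$ (where $t<0$ is infeasible) rigorously via linearity and continuity, whereas the paper simply evaluates the criterion at $P_{11}=1$; this is a tightening of the same argument, not a different approach.
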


\begin{proof}
We elaborate upon the proof of Theorem \ref{theorem:postreatempty} since, given the intercept is positive as established in Lemma \ref{lemma:gammapos}, the positive treatment set will only contain pairs $(P_{11},t)$ with $t<0$ if and only if $\tau^*_{cs} < 0$ for $P_{11}=1$.
We found that:
\begin{equation*}
    \tau^*_{cs} = \frac{b_{10} - b_{11} + c_{11}}{b_{10}},
\end{equation*}
which, since $b_{10}$ by definition is nonnegative, is smaller than zero if and only if:
\begin{align*}
    b_{10} - b_{11} + c_{11} < 0, \\
    c_{11} < b_{11} - b_{10},
\end{align*}
which is denominated the causal bonus condition. 
\end{proof}
As a corollary of Theorem \ref{theorem:postreatpos}, we have:
\begin{corollary}\label{cor:posITE}
The positive treatment set cannot contain pairs $(P_{11},t)$ with $t<0$ if the causal bonus condition is not satisfied, i.e., if $c_{11} \geq b_{11}-b_{10}$, which includes all cases with $b_{11} \leq b_{10}$.
\end{corollary}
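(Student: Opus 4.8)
The plan is to obtain this corollary as the logical contrapositive of Theorem \ref{theorem:postreatpos}, which has already been established as a biconditional. That theorem asserts that the positive treatment set $\mathcal{P}$ contains pairs $(P_{11},t)$ with $t<0$ \emph{if and only if} the causal bonus condition $c_{11} < b_{11}-b_{10}$ holds. Negating both sides of this equivalence immediately yields that $\mathcal{P}$ contains no such pairs if and only if $c_{11} \geq b_{11}-b_{10}$, which is precisely the failure of the causal bonus condition. Hence the first assertion of the corollary is not a fresh argument at all, but rather the direct restatement of one direction of the already-proved equivalence.

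For the parenthetical claim that the case $b_{11} \leq b_{10}$ is subsumed, I would simply check that this inequality forces the bonus condition to fail. The reasoning rests on the sign of $c_{11}$: by the convention adopted in the $\mathbf{TC}$ matrix, all costs are nonnegative, so $c_{11} \geq 0$. Whenever $b_{11} \leq b_{10}$, the bonus $b_{11}-b_{10}$ is nonpositive, and therefore
\begin{equation*}
    c_{11} \geq 0 \geq b_{11}-b_{10},
\end{equation*}
so that $c_{11} \geq b_{11}-b_{10}$ and the causal bonus condition is violated. By the first part of the corollary, $\mathcal{P}$ then contains no pairs with $t<0$.

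There is essentially no hard step here, since the substantive content lives entirely in Theorem \ref{theorem:postreatpos}; the only point requiring any care is invoking the nonnegativity of $c_{11}$ to handle the special case, and this is guaranteed by the same convention on the treatment-cost matrix that was used in the proof of Lemma \ref{lemma:gammapos}. I would therefore present the argument in two short lines—first the contrapositive, then the sign computation above—and state that this completes the proof.
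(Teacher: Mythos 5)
Your proposal is correct and matches the paper's intent exactly: the paper derives this corollary directly from Theorem~\ref{theorem:postreatpos} as the contrapositive of the stated biconditional, which is precisely your first step. Your additional verification that $b_{11} \leq b_{10}$ forces $c_{11} \geq 0 \geq b_{11}-b_{10}$, via the nonnegativity convention on the $\mathbf{TC}$ matrix, correctly fills in the parenthetical claim that the paper leaves implicit.
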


Theorem \ref{theorem:postreatpos} and the causal bonus condition may be intuitively difficult to accept and interpret since it is unexpected that the positive treatment may have to be applied even when the estimated ITE is negative. 

If the causal bonus condition is satisfied, then for an instance with a positive outcome for the negative treatment, we may still have to apply the positive treatment, not so much for changing the outcome from the negative to the positive class, but to reap a potential bonus since the increase in benefit for a positive outcome and positive treatment, $b_{11}$, compared to a positive outcome for a negative treatment, $b_{10}$, is larger than the cost of the positive treatment, $c_{11}$. 
We are not guaranteed to reap the bonus even if we are sure the outcome for the negative treatment is positive, since applying the positive treatment may have a negative effect on the outcome. For a negative converted instance, the positive outcome for negative treatment converts into a negative outcome for positive treatment. 
Theorem \ref{theorem:postreatpos} indicates that if there is a potential bonus of applying the positive treatment, even when the probability for the positive outcome probability decreases (to some extent) as a result of applying the treatment, i.e., for a negative estimated ITE, the positive treatment may still have to be applied. It is the causal cost-sensitive decision boundary that balances the risk of a negative conversion and the opportunity of seizing a bonus.

In many applications, the benefit of a positive outcome may be expected to be independent of the treatment that has been applied, i.e., $b_{11} = b_{10}$. Then, as well as for $b_{11} < b_{10}$ and generally  $c_{11} \geq b_{11}-b_{10}$, Corollary \ref{cor:posITE} indicates that no instances with negative estimated ITE will be classified in the positive treatment class.

\section{Cost-sensitive ranking}

\subsection{Rationale}
In the case of a limited number of available positive treatments that can be applied because of resource restrictions, such as a limitation with respect to the available budget, time, and human or natural resources, the optimal cost-sensitive decision boundary as defined by Equation \ref{eq:ITE_threshold} may not be feasible to implement. 

A restriction on the number of available treatments represents a realistic scenario in many  application settings and does not necessarily implies a global suboptimal operating point, since lifting the restriction may alter the cost-benefit structure of the problem. For instance, in order to apply more positive treatments, an investment may be required in infrastructure, thus significantly altering the cost per treatment. We enlist a deeper analysis of problem settings with resource-dependent treatment-cost and outcome-benefit matrices as a topic for further research. 

Given the applicable restriction, the question arises as to how to classify a subset of instances in the positive treatment class in a cost-sensitive manner? Which instances are to be prioritized in applying the limited number of positive treatments that are available? 

Intuitively, it makes sense to shift the decision boundary upward by increasing the intercept of the decision boundary while keeping the slope constant, until the number of instances above the decision boundary (i.e., in the positive treatment class) equals the number of positive treatments that can be applied, i.e., until the constraint is satisfied. This would be the cost-sensitive equivalent of the cost-insensitive approach of ranking instances based on the ITE estimates to support the selection of the optimal subset of instances. 

\subsection{Expected causal profit ranker}
Mathematically, this problem involves a constrained optimization problem. The reward function that is to be maximized is the expected causal profit, as defined in Definition \ref{def:expectedcausalprofit}. Hence, instances are to be ranked and selected based on the expected causal profit. This, as we will prove below, is equivalent too the intuitive approach of ranking instances based on the distance to the decision boundary, or better, the displacement from the decision boundary to the instance, which is negative for instances in the positive treatment set and negative for instances in the negative treatment set and thus yields an appropriate ranking.

The displacement $d(x_i)$ from the decision boundary $\tau^*_{cs} = \gamma + \delta P_{11}$ for instance $x_i$ with coordinates $\big(P_{11,i},t(x_i)\big)$ is given by:

\begin{equation} \label{eq:ITE_distance}
    d(x_i) = \frac{t(x_i) - \delta P_{11,i} - \gamma}{\sqrt{\delta^2 + 1}}
\end{equation}

\begin{proposition}\label{theorem:distexcausprof}
The displacement from the decision boundary for an instance, as defined in Equation \ref{eq:ITE_distance}, is proportional to the expected causal profit of the instance, as defined by Equation \ref{eq:expectedcausalprofit}. 
\end{proposition}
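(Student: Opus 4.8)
The plan is to show that the numerator of the displacement, $t(x_i)-\delta P_{11,i}-\gamma$, coincides with the expected causal profit $E[\dot{P}|x_i]$ up to a fixed multiplicative constant that is independent of the instance. Since the denominator $\sqrt{\delta^2+1}$ in Equation \ref{eq:ITE_distance} is likewise a fixed positive constant, the proportionality of $d(x_i)$ to $E[\dot{P}|x_i]$ then follows at once. In effect, the numerator is nothing but $t(x_i)-\tau^*_{cs}$, the signed vertical gap between the instance and the decision boundary $\tau^*_{cs}=\gamma+\delta P_{11}$, and the whole content of the proposition is that this gap is an affine rescaling of the expected causal profit.

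First I would expand $E[\dot{P}|x]$ using Definitions \ref{def:expectedprofit} and \ref{def:expectedcausalprofit}, substituting $P_{0w}=1-P_{1w}$ and then eliminating $P_{10}$ through the identity $P_{10}=P_{11}-t$ that follows from Equation \ref{eq:estimated_ITE}. This turns the expected causal profit into an affine function of the two coordinates $P_{11}$ and $t$ alone. I would then group the resulting expression into its coefficient of $t$, its coefficient of $P_{11}$, and its constant term. The key algebraic step is to recognize all three coefficients in terms of the common denominator $D:=b_{10}-c_{10}-b_{00}+c_{00}$ appearing in both $\gamma$ (Equation \ref{eq:gamma}) and $\delta$ (Equation \ref{eq:delta}): one checks that the coefficient of $t$ equals $D$, the coefficient of $P_{11}$ equals $-\delta D$, and the constant term equals $-\gamma D$, so that $E[\dot{P}|x]=D\,(t-\delta P_{11}-\gamma)$. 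Comparing with Equation \ref{eq:ITE_distance} immediately yields $d(x_i)=\tfrac{1}{D\sqrt{\delta^2+1}}\,E[\dot{P}|x_i]$, which is the claimed proportionality, with constant $\tfrac{1}{D\sqrt{\delta^2+1}}$.

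The main obstacle is purely the bookkeeping involved in matching the three collected coefficients against the \emph{unsimplified} definitions of $\gamma$ and $\delta$: the constant term and the $P_{11}$-coefficient each involve all eight cost and benefit parameters $b_{ij},c_{ij}$, and the sign pattern must be tracked carefully to confirm that they factor as $-\gamma D$ and $-\delta D$ rather than some other grouping. The cleanest way to avoid errors is to first rewrite $E[\dot{P}|x]$ in terms of the net parameters $\dot{cb}_{ij}=b_{ij}-c_{ij}$, so that the comparison with the definitions of $\gamma$ and $\delta$ reduces to a handful of two-term differences.

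One secondary point deserves a remark. The proportionality constant $\tfrac{1}{D\sqrt{\delta^2+1}}$ is well defined precisely because $D\neq 0$, which is exactly the nondegeneracy already implicit in Equations \ref{eq:gamma} and \ref{eq:delta}, where $D$ sits in the denominator, so no new assumption is required. Moreover, for the induced ranking to share the \emph{orientation} of the expected causal profit (so that instances in the positive treatment set receive positive displacement), one needs $D>0$, i.e.\ $\dot{cb}_{10}>\dot{cb}_{00}$; under the normalization used in the analysis of the decision boundary this reduces to $D=b_{10}\geq 0$, which holds by definition of the benefit parameters, confirming that the displacement and the expected causal profit agree in sign as well as up to scale.
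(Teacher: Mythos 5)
Your proposal is correct, and it follows the same overall route as the paper: expand $E[\dot{P}|x]$ via Definitions \ref{def:expectedprofit} and \ref{def:expectedcausalprofit}, eliminate $P_{10}$ using $P_{10}=P_{11}-t$, collect the coefficients of $t$ and $P_{11}$ and the constant term, and compare with the boundary equation. The one substantive difference is in your favor: you track the common factor $D=b_{10}-c_{10}-b_{00}+c_{00}$ explicitly and obtain
\begin{equation*}
E[\dot{P}|x] = D\left(t(x)-\delta P_{11}-\gamma\right),
\qquad
d(x_i)=\frac{1}{D\sqrt{\delta^2+1}}\,E[\dot{P}|x_i],
\end{equation*}
whereas the paper's own derivation asserts the stronger identity $E[\dot{P}|x]=t(x)-\delta P_{11}-\gamma$, which is valid only in the special case $D=1$; one can check directly that the coefficient of $t$ in the expansion is $\dot{cb}_{10}-\dot{cb}_{00}=D$, not $1$. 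This slip is harmless for the proposition itself (proportionality survives multiplication by any nonzero constant, and $D\neq 0$ is already implicit since $D$ is the denominator of both $\gamma$ and $\delta$), but your version is the correct bookkeeping, and your closing observation is a genuine addition: for the displacement ranking to agree with the expected-causal-profit ranking in \emph{orientation}, not merely up to scale, one needs $D>0$, which under the paper's normalization $c_{00}=c_{10}=0$, $b_{00}=b_{01}=0$ amounts to $b_{10}>0$ --- a condition the paper never states. The only nitpick is your phrase ``$D=b_{10}\geq 0$ holds by definition'': nonnegativity is not quite enough, since $b_{10}=0$ would make $\gamma$ and $\delta$ undefined; strict positivity is what you need, and it is exactly the nondegeneracy you already invoked.
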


\begin{proof}
The expected causal profit is defined in Equation \ref{eq:expectedcausalprofit} as:
\begin{align*}
    E[\dot{P}|x] = E[P|x,1] - E[P|x,0].
\end{align*}
By entering the formula for the expected profit, $E[P|x,w]$, as defined in Equation \ref{eq:expectedprofit}, and by elaborating the resulting equation (the elaboration is similar to that of obtaining Equation \ref{eq:ITE_threshold} of the cost-sensitive causal decision boundary), we obtain:
\begin{align}
    E[\dot{P}|x] &=  P_{11} (b_{11}-c_{11}) + (1-P_{11}) (b_{01}-c_{01}) - P_{10} (b_{10}-c_{10}) - (1-P_{10}) (b_{00}-c_{00}), \nonumber \\
    &=  t(x) - \delta P_{11} - \gamma
\end{align}
Since $\sqrt{\delta^2 + 1}$ is a constant, i.e., a function of the cost and benefit parameters, we find that the expected causal profit, $E[\dot{P}|x]$, is proportional to the displacement $d(x)$ from the cost-sensitive causal decision boundary to the instance, as defined in Equation \ref{eq:ITE_distance}.

\end{proof}

We denominate the proposed ranking approach as the expected causal profit (ECP) ranker, whereas the conventional ranking approach based on the estimated ITE will be denominated the ITE ranker.

The ECP ranker supports decision-makers in selecting the set of instances that should be treated with the positive treatment given a restriction on the number of positive treatments that can be applied. For a restricted number of treatments $r$, the ECP ranker facilitates straightforward selection of the optimal subset, i.e., the $r$ first ranked instances, to maximize the expected causal profit given the constraint. 

In case of a restriction on the available budget, the following straightforward constrained optimization problem is to be solved.  Let $\mathcal{D}=\{x_j\}$ for $j=1 \dots n$ with $d(x_j) \geq d(x_j+1) \geq 0$ be the resulting ranked set of instances and $m$ the budget for applying treatments. Then, the optimal number of treated instances, $T(\mathcal{D})$, can be computed as:

\begin{equation} \label{eq:optimization_problem}
\begin{aligned}
\max \quad & T(\mathcal{D}) = T_{0} + T_{1}\\
\textrm{s.t.} \quad &  T_{0} c_{01} + T_{1} c_{11}\leq m.\\
\end{aligned}
\end{equation}

With $T_{0}$ and $T_{1}$ the estimated numbers of negative and positive outcomes, respectively, equal to:

\begin{align}
    T_{0} = \sum_{j=1}^T (1-P_{11,j}), \\
    T_{1} = \sum_{j=1}^T P_{11,j},
\end{align}

with $P_{11,j} = P(Y=1|x_j,w=1)$ for $x_j \in \mathcal{D}$. 

\section{Empirical evaluation}
\label{sec:Experiments}

In this section, the cost-sensitive causal decision-making framework is applied on synthetic and marketing campaign data (see Table \ref{tab:info_datasets}), all publicly available. The main objective in this section is to illustrate and evaluate the use of the newly proposed ECP ranker compared to the ITE ranker. An extensive benchmark experiment and detailed discussion of the cases are beyond the scope of this study. By providing the full experimental code\footnote{The experimental code and data that are used in this study will be published in a GitHub repository upon publication of this article and can be provided to the reviewers upon request.} that we used in arriving at the reported results, we aim to promote the adoption of the presented framework and the ECP ranker and facilitate the validation of the presented results across various decision-making settings and the publication of additional empirical results.

We first introduce the data and then outline the methodology, including the experimental design, the cost-benefit parameters, the causal classification methods and the performance metrics. Finally, we present and discuss the experimental results.

\subsection{Data}

\begin{table} [ht]
\centering
\caption{Data sets information}
\label{tab:info_datasets} 
\resizebox{370pt}{!}{%
\begin{tabular}{ l c c c c}
\noalign{\smallskip}\toprule
\multicolumn{3}{l}{Data}\\ \hline
\noalign{\smallskip}
Organization & Synthetic & Criteo & Bank & Hillstrom\\ \hline
\noalign{\smallskip}
Total number of observations & 10000 & 11647 & 10043 & 14232\\
\noalign{\smallskip}
Total number of variables & 16 & 12 & 160 & 17\\
\noalign{\smallskip}
Number of control group observations & 5000 & 11292 & 5939 & 7102\\
\noalign{\smallskip}
Proportion of positive outcomes in control group & 0.49 & 0.04 & 0.25 & 0.10\\
\noalign{\smallskip}
Number of treatment group observations & 5000 & 355 & 4104 & 7130\\
\noalign{\smallskip}
Proportion of positive outcomes in treatment group & 0.57 & 0.41 & 0.13 & 0.15\\
\noalign{\smallskip}
Overall effect & 0.08 & 0.37 & 0.12 & 0.04\\
\bottomrule
\end{tabular}}
\end{table}

We employed the \textit{causalml} library \cite{chen2020causalml} to generate a synthetic causal data set, which is based on the approach developed by \cite{guyon2003design} to generate the Madelon data set. The treatment and control groups are specified to be of equal size and the positive class proportion is set at 50\%. The data set consists of 16 pretreatment variables among which 11 are set to generate positive treatment effects and 5 are designated for base classification.

The Criteo data set is a large publicly available data set \cite{diemert2018large} from which we sampled a random subset, maintaining the class distribution in both the treatment and the control group, so that the overall treatment effect is the same as in the original data set. The Criteo data set involves a randomized controlled trial that is conducted to analyze the effect of a digital advertising campaign. Subjects that were exposed to the campaign constitute the treatment group (positive treatment, $W=1$), whereas control group instances were not targeted (negative treatment, $W=0$). The data set contains 12 pretreatment variables, the treatment indicator and the binary outcome variable. A positive outcome ($Y=1$) represents a user who visited the advertiser website during a test period of 2 weeks. The positive outcome rate in the control group is 37\% and in the treatment group 41\%, yielding an overall positive effect of the campaign of 4\%. 

The Bank data set is obtained from a financial institution that applied a retention campaign to reduce customer churn. This data set has been used in prior studies; for additional details and experimental results one may refer to \cite{devriendt2019you}. Customers that were targeted with the retention campaign constitute the treatment group (positive treatment, $W=1$), whereas the customers in the control group were not targeted (negative treatment, $W=0$). The outcome variable indicates whether a customer churned in the period after the campaign was deployed, which is the negative outcome ($Y=0$), or whether a customer did not churn, which is the positive outcome ($Y=1$). Pretreatment variables include sociodemographic information, as well as usage and activity indicators. The churn rate in the control group is equal to 25\% and in the treatment group equal to 13\%, yielding an overall positive effect of the campaign of 12\%.

The Hillstrom data set consists of data on a direct marketing campaign \cite{hillstrom2008minethatdata}. The treatment group comprises customers who received an e-mail with women's merchandise, whereas the control group are customers who were not contacted. The outcome variable indicates whether a customer visits the website within two weeks after the campaign. We use a subset of the original data set consisting of 22\% randomly subsampled instances, maintaining the original class and group distribution as well as the overall positive effect of the campaign on the positive outcome rate.

\subsection{Methodology}

We consider two cost-benefit scenarios to illustrate the use and assess the performance of the ECP ranker. In the first scenario, the benefit of a positive outcome for $W=1$ is assumed to be 1.2 times the benefit of a positive outcome for $W=0$, i.e., $b_{11} > b_{10}$. In the second scenario, the benefit of a positive outcome for $W=0$ is set to 1.2 times the benefit of a positive outcome for $W=1$, i.e., $b_{10} > b_{11}$. On the other hand, the cost of the treatment is the same for a positive and negative outcome, i.e., $c_{01} = c_{11}$. No benefit is obtained in case of a negative outcome, and the negative treatment does not generate any costs. 

The value of the baseline benefit is set to $100$ monetary units, i.e., $b_{11} = 120$ and $b_{10} = 100$ in the first scenario and $b_{11} = 100$ and $b_{10} = 120$, whereas the cost of a positive treatment is fixed at $10\%$ of the baseline benefit, i.e., $c_{01} = c_{11} = 10$ monetary units.

Three causal classification metalearners, i.e., the T- and S-learner as well as the causal dorest (CF) method, are applied in combination with the base classifiers logistic regression (LR) and extreme gradient boosting (XGBoost) \cite{chen2016xgboost} to estimate individual treatment effects. 

Logistic regression is a typical baseline method that is included in experimental benchmarking studies to assess the performance of machine learning methods for business decision-making. It offers a good balance between interpretability, stability and predictive power and is therefore broadly adopted in industry across various fields of application \cite{devriendt2019you}. On the other hand, ensemble methods such as XGBoost have been shown to potentially achieve (much) better predictive power across different studies and competitions \cite{le2020xgboost}, an amenity that comes at the cost of reduced interpretability of the resulting model. 

The T- and S-learners are well-known baseline techniques for individual treatment effect estimation. Both metalearners are implemented as indicated in \cite{kunzel2019metalearners}. The T-learner is the most intuitive method and has demonstrated competitive performance under certain conditions \cite{rudas2018linear}. The causal forest (CF) \cite{wager2018estimation,athey2019generalized} is a forest-based method that has been reported to achieve good performance in prior studies \cite{davis2017using,luo2019and}. We use the CF implemented by \cite{econml} and specify the LR and XGBoost as estimators to fit the outcome and the treatment to the pretreatment variables.

In addition to the various combinations of the metalearners and base learners for estimating the individual treatment effect, we apply both the ITE and ECP ranker, as introduced in this article.

For evaluating the performance of the resulting causal classification models, we adopt the Qini coefficient \cite{radcliffe2007using}, which is the conventional, cost-insensitive approach in the literature to assess the performance of causal classification models. In addition, we evaluate performance in a cost-sensitive manner in terms of the average causal profit, $(\dot{AP})$, and the maximum causal profit, $(\dot{MP})$, measures, as defined in Section \ref{sec:classification}. Finally, we apply five-fold cross-validation, which is a standard procedure for achieving a reliable performance assessment.

\subsection{Results}

Table \ref{tab:qinis} reports the Qini coefficient for the four data sets, for the various combinations of ranker, metalearners and base learners and for the two cost-benefit scenarios. Note that a larger value of the Qini coefficient indicates an overall (across thresholds) better performance of the causal model in terms of increasing the positive outcome rate. The result of the best model is shown in bold for each data set and the result of the best ranking approach, cost-insensitive or cost-sensitive, is underlined. 

In Table \ref{tab:qinis}, we observe that the ITE ranker achieves the overall best performance (in bold) in five setups, whereas the ECP ranker in three setups. 
The ECP ranker does not aim to improve performance in Qini but in terms of profit; however, it is observed to do so across the three marketing campaign data sets, specifically for the scenario $b_{11} > b_{10}$. 
For the scenario $b_{11} < b_{10}$, on the other hand, it is the ITE ranker that achieves the best performance across all data sets.
When assessing the best performance per meta- and base-learner combination, we almost consistently observe the same effect for all three marketing campaign data sets, with the ECP ranker outperforming the ITE ranker for the scenario $b_{11} > b_{10}$ and vice versa for $b_{11} < b_{10}$. On the synthetic data set, however, the ITE ranker outperforms the ECP ranker consistently for both scenarios, with the exception of the LR-S-Learner combination.

\begin{table} [ht]
\centering
\caption{Qini coefficient}
\label{tab:qinis} 
\resizebox{420pt}{!}{%
\begin{tabular}{ l c c c c c c c c c c c c c c c c c }
\toprule \noalign{\smallskip}
\multirow{5}{*}{Model} & \multirow{5}{*}{Benefit} & \multicolumn{4}{c}{Synthetic} &\multicolumn{4}{c}{Bank} & \multicolumn{4}{c}{Criteo} & \multicolumn{4}{c}{Hillstrom}\\
 \noalign{\smallskip} \cmidrule(lr){3-6} \cmidrule(lr){7-10} \cmidrule(lr){11-14} \cmidrule(lr){15-18}\noalign{\smallskip}
 &  & \multicolumn{2}{c}{LR} & \multicolumn{2}{c}{XGBoost} & \multicolumn{2}{c}{LR} & \multicolumn{2}{c}{XGBoost} & \multicolumn{2}{c}{LR} & \multicolumn{2}{c}{XGBoost} & \multicolumn{2}{c}{LR} & \multicolumn{2}{c}{XGBoost}\\ 
\noalign{\smallskip} \cmidrule(lr){3-4}\cmidrule(lr){5-6}\cmidrule(lr){7-8}\cmidrule(lr){9-10}\cmidrule(lr){11-12}\cmidrule(lr){13-14}\cmidrule(lr){15-16}\cmidrule(lr){17-18} \noalign{\smallskip}
 & & ITE & ECP & ITE & ECP & ITE & ECP & ITE & ECP & ITE & ECP & ITE & ECP & ITE & ECP & ITE & ECP\\ 
 \noalign{\smallskip} \hline \noalign{\smallskip}
\multirow{2}{*}{CF} & $b_{11} > b_{10}$ & \underline{0.27} & 0.25	& \underline{0.28} & 0.27 & -0.09 & \underline{-0.04} & \underline{-0.02} & -0.07 & 0.06 & \underline{0.07} & 0.05 & \underline{0.06} & 0.02 & \underline{0.04} & -0.01 & \underline{0.02}\\
\noalign{\smallskip}
 & $b_{11} < b_{10}$ & \underline{0.27} & 0.24 & \underline{0.28} & 0.25 & \underline{-0.09} & -0.13 & \underline{-0.02} & -0.05 & \underline{0.06} & 0.04 & \underline{0.05} & 0.03 & \underline{0.02} & -0.01 & \underline{-0.01} & -0.04\\
\noalign{\smallskip} \cmidrule(lr){1-18} \noalign{\smallskip}
\multirow{2}{*}{T-learner} & $b_{11} > b_{10}$ & \textbf{0.33} & 0.32 & \underline{0.31} & 0.29 & 0.02 & \underline{0.03} & 0.12 & \textbf{0.13} & 0.05 & \underline{0.07} & 0.01 & \underline{0.02} & 0.03 & \underline{0.05} & 0.02 & \underline{0.04}\\
\noalign{\smallskip}
&  $b_{11} < b_{10}$ & \textbf{0.33} & 0.31 & \underline{0.31} & 0.29 & \underline{0.02} & 0.01 & \textbf{0.12} & 0.09 & \underline{0.05} & 0.03 & \underline{0.01} & 0.004 & \textbf{0.03} & 0.01 &	\underline{0.02} & 0.01\\
\noalign{\smallskip} \cmidrule(lr){1-18} \noalign{\smallskip}
\multirow{2}{*}{S-learner} & $b_{11} > b_{10}$ & 0.14	& \underline{0.15} & \underline{0.28} & 0.27 & 0.02 & \underline{0.03} & -0.04 & \underline{0.01} & 0.07 & \textbf{0.08} & 0.04 & \underline{0.06} & \underline{0.02} & 0.01 & 0.02 & \textbf{0.06}\\
\noalign{\smallskip}
& $b_{11} < b_{10}$ & 0.14 & \underline{0.15} & \underline{0.28} & 0.26 & \underline{0.02} & 0.01 & \underline{-0.04} & -0.08 & \textbf{0.07} & 0.06 & \underline{0.04} & 0.02 & \underline{0.02} & -0.01 & \underline{0.02} & 0.01\\
\noalign{\smallskip}
\bottomrule
\end{tabular}}
\end{table}

Table \ref{tab:average_profit_benefit} reports the average causal profit per instance. For the scenario with $b_{11} > b_{10}$, the ECP ranker achieves the overall best performance (in bold) across all data sets. For $b_{11} < b_{10}$, however, the ITE ranker performs best for the three campaign data sets. When assessing performance per metalearner and base learner combination, we observe that for the scenario $b_{11} > b_{10}$, the ECP ranker without exception improves upon the ITE ranker, which is a notable result. 

For $b_{11} < b_{10}$, except for the synthetic data set and the combination XGBoost-S-learner on the Hillstrom data set, we see that the ITE ranker performs best, which is unexpected. A further analysis of this result is provided in Section \ref{subsec:analysisresults}.

Note that the absolute difference in performance between the ITE and ECP ranker that is observed in Table \ref{tab:average_profit_benefit} results from the arbitrary values that have been used in the causal cost-benefit matrix.

\begin{table} [ht]
\centering
\caption{Average causal profit, $\dot{AP}$.}
\label{tab:average_profit_benefit} 
\resizebox{460pt}{!}{%
\begin{tabular}{ l c c c c c c c c c c c c c c c c c}
\toprule \noalign{\smallskip}
\multirow{5}{*}{Model} & \multirow{5}{*}{Benefit} & \multicolumn{4}{c}{Synthetic} &\multicolumn{4}{c}{Bank} & \multicolumn{4}{c}{Criteo} & \multicolumn{4}{c}{Hillstrom}\\
 \noalign{\smallskip} \cmidrule(lr){3-6} \cmidrule(lr){7-10} \cmidrule(lr){11-14} \cmidrule(lr){15-18}\noalign{\smallskip}
 &  & \multicolumn{2}{c}{LR} & \multicolumn{2}{c}{XGBoost} & \multicolumn{2}{c}{LR} & \multicolumn{2}{c}{XGBoost} & \multicolumn{2}{c}{LR} & \multicolumn{2}{c}{XGBoost} & \multicolumn{2}{c}{LR} & \multicolumn{2}{c}{XGBoost}\\ 
\noalign{\smallskip} \cmidrule(lr){3-4}\cmidrule(lr){5-6}\cmidrule(lr){7-8}\cmidrule(lr){9-10}\cmidrule(lr){11-12}\cmidrule(lr){13-14}\cmidrule(lr){15-16}\cmidrule(lr){17-18} \noalign{\smallskip}
 & & ITE & ECP & ITE & ECP & ITE & ECP & ITE & ECP & ITE & ECP & ITE & ECP & ITE & ECP & ITE & ECP\\  
 \noalign{\smallskip} \hline \noalign{\smallskip}
\multirow{2}{*}{CF} & $b_{11} > b_{10}$ & 4.55 & \underline{6.34} & 4.66 & \underline{6.42} & -6.22 & \underline{-2.82} & 3.17 & \underline{9.76} & 27.83 & \underline{28.29} & 27.28 & \underline{28.59} & -1.98 & \underline{-1.62} & -2.26 & \underline{-1.86}\\
\noalign{\smallskip}
 & $b_{11} < b_{10}$ & -1.33 & \underline{-0.14} & -1.16 & \underline{-0.02} & \underline{-20.40} & -21.25 & \underline{-11.66} & -14.16 & \underline{22.24} & 19.47 & \underline{21.80} & 19.00 & \underline{-3.68} & -3.82 & \underline{-3.91} &	-4.03\\
\noalign{\smallskip} \cmidrule(lr){1-18} \noalign{\smallskip}
\multirow{2}{*}{T-learner} & $b_{11} > b_{10}$ & 3.95 & \underline{6.28} & 3.93 & \textbf{6.52} & 7.44 & \underline{9.21} & 9.96 & \textbf{12.45} & 26.75 & \underline{29.98} & 20.39 & \underline{21.71} & -1.82 & \underline{-1.52} & -2.03 & \underline{-1.73}\\
\noalign{\smallskip}
&  $b_{11} < b_{10}$ & 0.63 & \textbf{0.70} & 0.25 & \underline{0.58} & \underline{-7.37} & -8.63 & \textbf{-4.33} & -6.05 & \underline{21.34} & 18.32 & \underline{16.21} & 14.68 & \textbf{-3.55} & -3.71 & \underline{-3.61} & -3.70\\
\noalign{\smallskip} \cmidrule(lr){1-18} \noalign{\smallskip}
\multirow{2}{*}{S-learner} & $b_{11} > b_{10}$ & 2.56 & \underline{4.38} & 3.36 & \underline{6.41} & 7.44 & \underline{9.21} & 2.03 & \underline{10.00} & 30.59 & \textbf{31.03} & 25.32 & \underline{27.78} & -2.10 & \underline{-1.92} & -1.95 & \textbf{-1.47}\\
\noalign{\smallskip}
& $b_{11} < b_{10}$ & -2.75 & \underline{-1.47} & 0.21 & \underline{0.23} & \underline{-7.37} & -8.63 & \underline{-11.46} & -13.28 & \textbf{24.35} & 22.77 & \underline{20.16} & 16.62 & \textbf{-3.56} & -3.71 & -3.65 & \underline{-3.62}\\
\noalign{\smallskip}
\bottomrule
\end{tabular}}
\end{table}

Finally, the maximum causal profit is reported in Table \ref{tab:max_profit_benefit}, together with the optimal proportion of instances in the positive treatment class. The results are, as expected, almost fully in line with the results in Table \ref{tab:average_profit_benefit}.

\begin{table} [ht]
\centering
\caption{Maximum profit, $\dot{MP}$, for varying levels of the benefit, $b$}
\label{tab:max_profit_benefit} 
\resizebox{420pt}{!}{%
\begin{tabular}{ l c c c c c c c c c c c c c c c c c}
\toprule \noalign{\smallskip}
\multirow{5}{*}{Model} & \multirow{5}{*}{Benefit} & \multicolumn{4}{c}{Synthetic} &\multicolumn{4}{c}{Bank} & \multicolumn{4}{c}{Criteo} & \multicolumn{4}{c}{Hillstrom}\\
 \noalign{\smallskip} \cmidrule(lr){3-6} \cmidrule(lr){7-10} \cmidrule(lr){11-14} \cmidrule(lr){15-18}\noalign{\smallskip}
 &  & \multicolumn{2}{c}{LR} & \multicolumn{2}{c}{XGBoost} & \multicolumn{2}{c}{LR} & \multicolumn{2}{c}{XGBoost} & \multicolumn{2}{c}{LR} & \multicolumn{2}{c}{XGBoost} & \multicolumn{2}{c}{LR} & \multicolumn{2}{c}{XGBoost}\\ 
\noalign{\smallskip} \cmidrule(lr){3-4}\cmidrule(lr){5-6}\cmidrule(lr){7-8}\cmidrule(lr){9-10}\cmidrule(lr){11-12}\cmidrule(lr){13-14}\cmidrule(lr){15-16}\cmidrule(lr){17-18} \noalign{\smallskip}
 & & ITE & ECP & ITE & ECP & ITE & ECP & ITE & ECP & ITE & ECP & ITE & ECP & ITE & ECP & ITE & ECP\\ 
 \noalign{\smallskip} \hline \noalign{\smallskip}
\multirow{4}{*}{CF} & \multirow{2}{*}{$b_{11} > b_{10}$} & 6.62 & \underline{9.06} & 6.89 & \underline{9.10} & 17.63 & 17.63 & 17.63 & 17.63 & 34.72 & \textbf{34.99} & 34.42 & \underline{34.59} & 0 & \underline{0.29} & 0.02 & \underline{0.41}\\
& & (16) & (41) & (18) & (42) & (100) & (100) & (100) &	(100) & (98) &  (97) & (95) & (94) & (0) & (9) & (2) & (2)\\
\noalign{\smallskip}
 & \multirow{2}{*}{$b_{11} < b_{10}$} & \underline{4.22} & 3.87 & \underline{4.17} & 3.86 & 0	& 0 & 0.01 & \underline{0.02} & \textbf{27.70} & 27.16 & \underline{27.47} & 27.16 & 0 & 0 & 0 & 0\\
 & & (12) &	(16) & (14) & (17) & (0) & (0) & (5) & (1) & (98) &	(100) &	(95) & (100) & (0) & (0) & (0) & (0)\\
\noalign{\smallskip} \cmidrule(lr){1-18} \noalign{\smallskip}
\multirow{4}{*}{T-learner} & \multirow{2}{*}{$b_{11} > b_{10}$} & 5.72 & \underline{9.05} & 6.58 & \textbf{9.51} & 17.63 & 17.63 & 18.39 & \textbf{19.84} & 34.15 & \underline{34.41} & 34.15 & 34.15 & 0.31 & \underline{0.64} & 0 & 0.19\\
& & (17) & (36) & (14) & (32) & (100) & (100) &	(89) & (82) & (100) & (75) & (100) & (100) & (1) & (4) & (0) & (12)\\
\noalign{\smallskip}
& \multirow{2}{*}{$b_{11} < b_{10}$} & 4.26 & \underline{4.40} & 5.16 & \textbf{5.23} & \textbf{0.76} & 0.69 & \underline{0.50} & 0.38 & 27.16 & 27.16 & 27.16 & 27.16 & \textbf{0.21} & 0 & 0 & 0\\
& & (10) & (9) & (8) & (9) & (8) & (6) & (4) & (4) & (100) & (100) & (100) & (100) & (1) & (0) & (0) & (0)\\
\noalign{\smallskip} \cmidrule(lr){1-18} \noalign{\smallskip}
\multirow{4}{*}{S-learner} & \multirow{2}{*}{$b_{11} > b_{10}$} & 4.02 & \underline{6.69} & 6.49 & \underline{9.24} & 17.63 & 17.63 & 17.63 & \underline{18.04} & 34.16 & 34.16 & \underline{34.48} & 34.41 & 0.21 & \textbf{0.48} & 0.04 & \underline{0.47}\\
& & (26) & (49) & (13) & (29) & (100) & (100) & (100) & (91) & (96) & (96) & (97) & (97) & (7) & (1) & (5) & (3)\\
\noalign{\smallskip}
& \multirow{2}{*}{$b_{11} < b_{10}$} & 0.77 & \underline{1.28} & 5.02 & \underline{5.21}	& \textbf{0.76} & 0.69 & 0 & 0 & \underline{27.17} & 27.16 & \underline{27.45} & 27.16 & 0 & 0 & 0 & 0\\
& & (18) & (22) & (13) & (12) & (8) & (6) & (0) & (0) & (96) & (100) & (97) & (100) & (0) &	(0)	& (0) & (0)\\
\noalign{\smallskip}
\bottomrule
\end{tabular}}
\end{table}

\subsection{Discussion}\label{subsec:analysisresults}

The results for the scenario $b_{11} > b_{10}$ in the previous section on the three marketing campaign data sets indicate an improvement in performance for the ECP ranker in terms of average and maximum causal profit, as well as, unexpectedly, in terms of the Qini measure. The performance does not improve and even declines for all three measures for the scenario $b_{11} < b_{10}$, which is unexpected. In this section, we therefore further explore the results to explain these findings.

To this end, we elaborate Equation \ref{eq:profit}, which defines the causal profit in terms of the causal confusion matrix for $b_{00}=b_{01}=0$, in line with the experimental setup. Moreover, since $b_{11} \approx b_{10} >> c_{11} = c_{01}$, we set $c_{ij}=0$ $\forall i,j$ to facilitate the analysis. As a result, we obtain the following expression for the causal profit as a function of the positive outcomes in the positive and negative treatment class:
\begin{align}\label{eq:causalprofitphi}
\dot{P}(\tau) &= \sum_{i,j}(\mathbf{\dot{E_{ij}} \circ \dot{CB_{ij}}}), \nonumber \\
        &= \dot{e}_{10} b_{10} + \dot{e}_{11} b_{11}, \nonumber \\
        &= \frac{-\big(C_1-C_1(\tau)\big)}{C_1 + C_0} b_{10}  + \frac{\big(T_1-T_1(\tau)\big)}{T_1 + T_0} b_{11}.
\end{align}
with $C_1(\tau)$ and $T_1(\tau)$ the number of positive instances with $t<\tau$, and $C_y$ and $T_y$ the number of instances with $Y=y$ in the treatment and control group, respectively. Hence, $\dot{e}_{10}$ is to be interpreted as the relative number of positives in the positive treatment class in the control data set and $\dot{e}_{11}$ as the relative number of positives in the positive treatment class of the treatment data set for threshold $\tau$. As such, by assessing the effect of the ECP ranker on the relative number of positives in the positive treatment class for both the treatment and control group, across thresholds $\tau$, we may gain a more fundamental insight into the effect of the ranker on the causal profit. 

In Section \ref{sec:proposal}, we discussed the effect of the ECP ranker, which for the scenario $b_{11} > b_{10}$ (i.e., for a negative slope of the decision boundary) aims at increasing the causal profit by increasing the number of positive instances in the positive treatment class, by giving priority to instances with high values of $P_{11}$. Note that the increase in positive instances will take place in both the treatment and control data sets, so both $\dot{e}_{11}$ and $\dot{e}_{10}$ are expected to increase equally. However, as can be seen from Equation \ref{eq:causalprofitphi}, since $b_{11} > b_{10}$, the net effect on the causal profit of the equal increase in both $\dot{e}_{11}$ and $\dot{e}_{10}$ will be positive. 
In contrast, for $b_{11} < b_{10}$ (i.e., for a positive slope of the decision boundary), the ECP ranker prioritizes instances with lower values of $P_{11}$. The decrease in positive instances in the positive treatment class is expected to be approximately the same for the treatment and control data sets, but since $b_{11} < b_{10}$, the net effect of the decrease in $\dot{e}_{10}$ (which has a positive effect on $\dot{P}$) will outweigh the net effect of the decrease in $\dot{e}_{11}$ (which has a negative effect on $\dot{P}$). As a result, the causal profit increases.  

In Figure \ref{fig:cumposplotsa}, the cumulative proportion of instances with a positive outcome is plotted as a function of the threshold $\tau$, both for the ECP (in blue) and ITE (in orange) ranker, and separately for the treatment and control group of the Bank data set.

\begin{figure}[ht]
 \centering
 \subfigure[Bank: treatment.]{\includegraphics[width=0.45\textwidth]{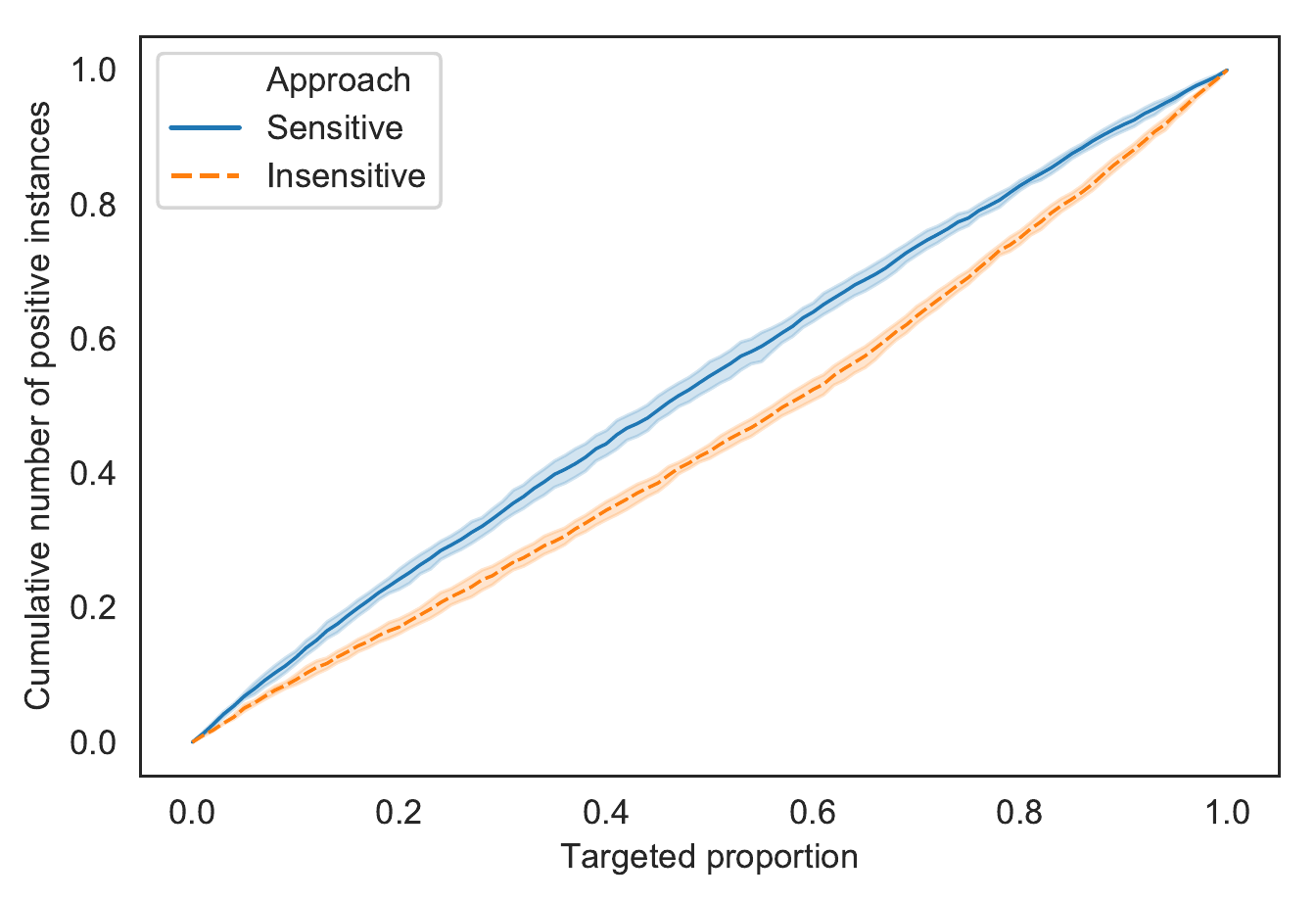}}
 \qquad
 \subfigure[Bank: control.]{\includegraphics[width=0.45\textwidth]{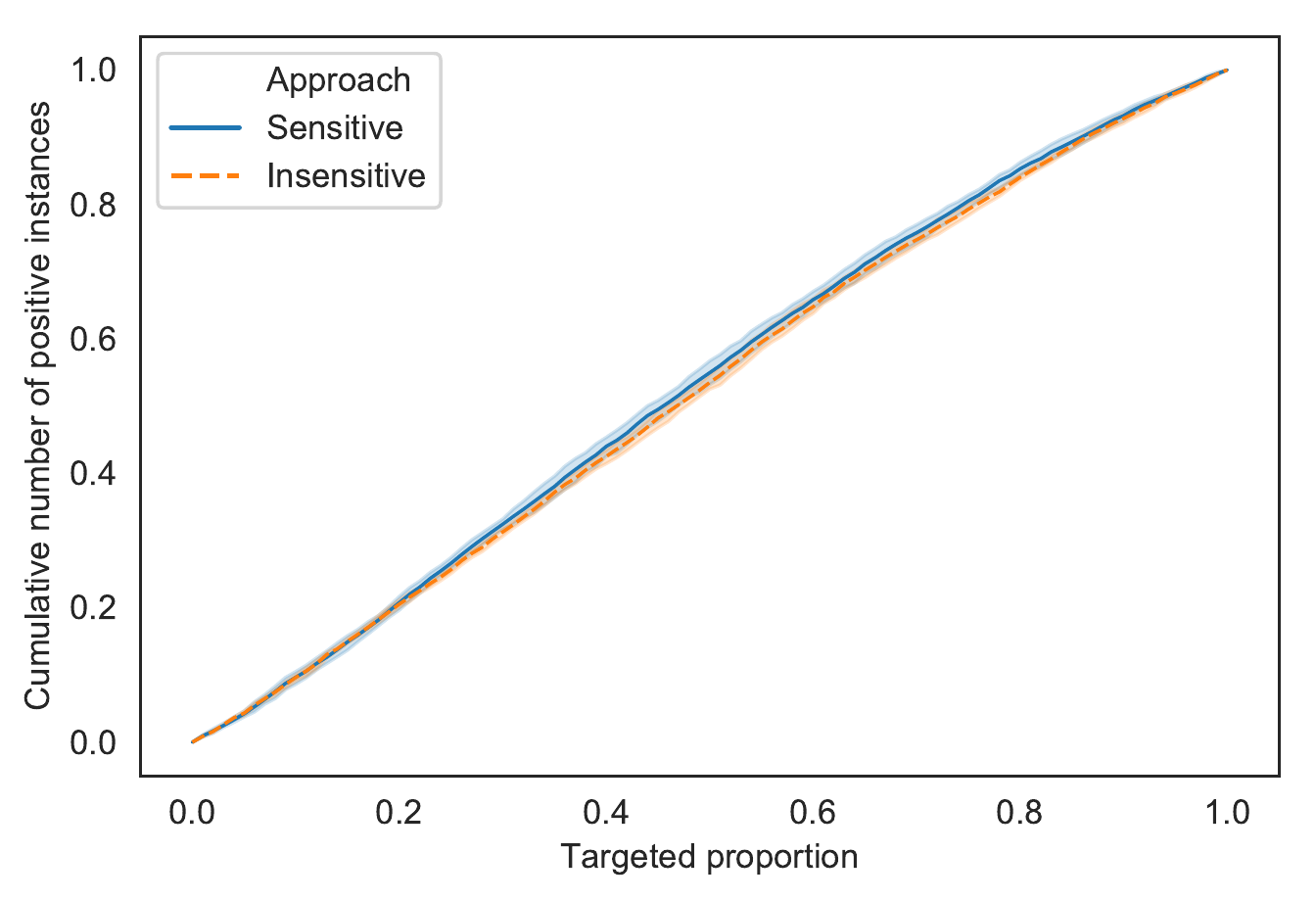}}
 \qquad
 \subfigure[Bank: treatment.]{\includegraphics[width=0.45\textwidth]{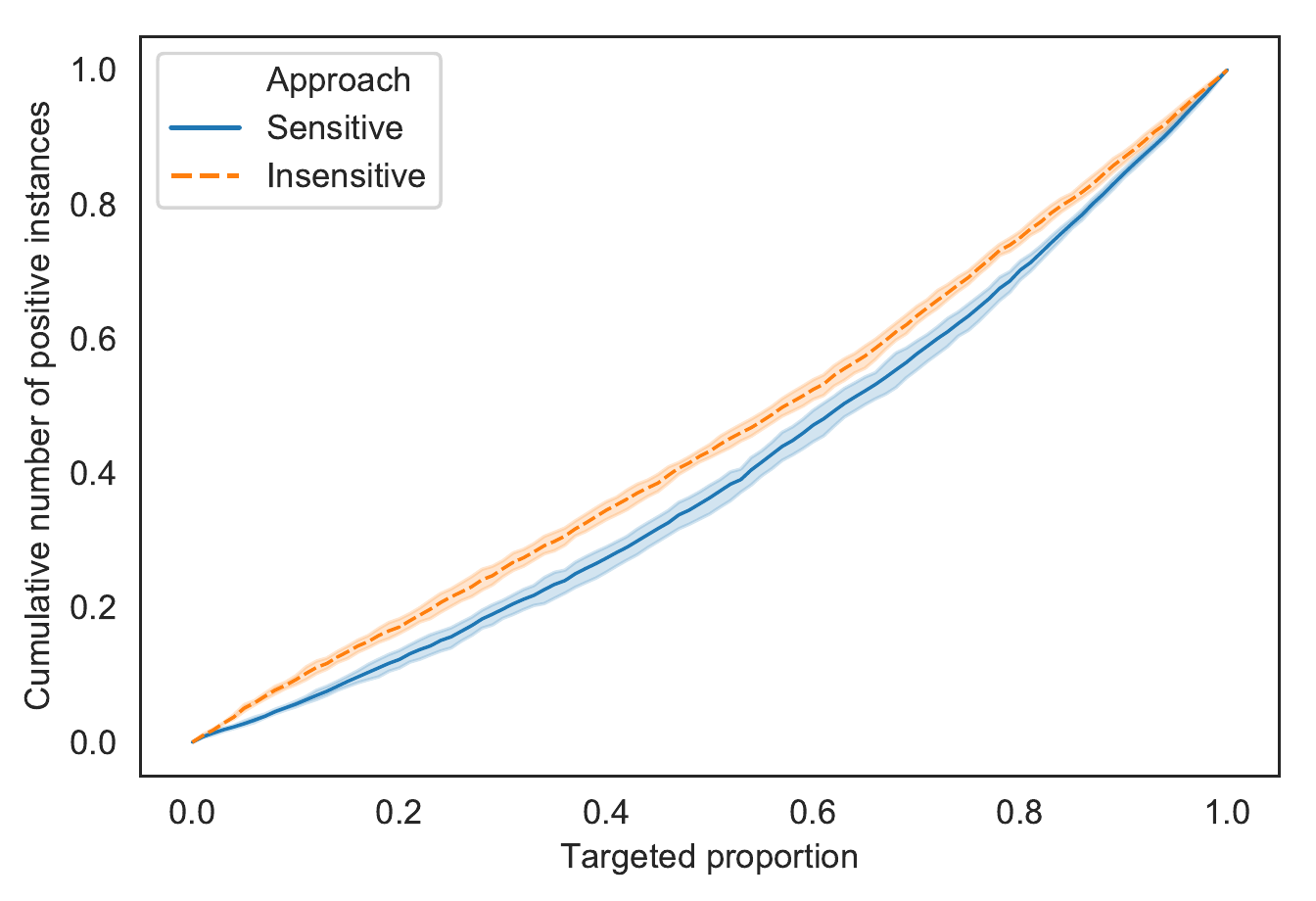}}
 \qquad
 \subfigure[Bank: control.]{\includegraphics[width=0.45\textwidth]{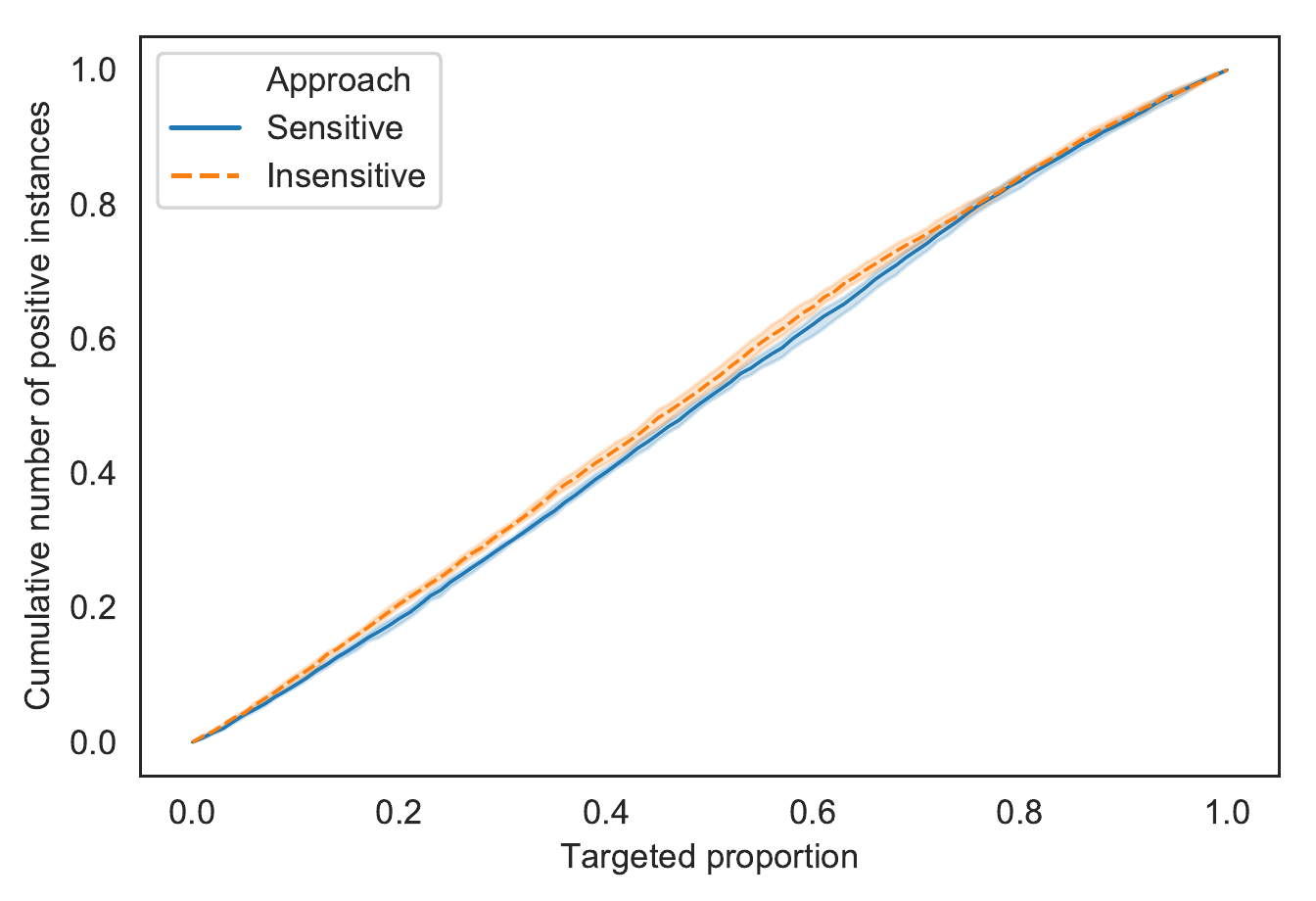}}
 \qquad
 \caption{Cumulative number of positive instances with a positive outcome as a function of the threshold $\tau$ for the Bank data set, for the scenario $b_{11} > b_{10}$ (top) and $b_{11} < b_{10}$ (bottom).}
 \label{fig:cumposplotsa}
\end{figure}

For the scenario $b_{11} > b_{10}$, we observe that the increase in positives when comparing the ECP and ITE ranker is larger for the treatment data set than for the control data set, across all thresholds $\tau$. As a result, $\dot{e}_{11}$ increases more than $\dot{e}_{10}$ in Equation \ref{eq:causalprofitphi}. This increases the positive effect of the ECP ranker on both the average and maximum causal profit that is obtained when $\dot{e}_{11}= \dot{e}_{10}$. 
Since the Qini coefficient is the area under the Qini curve, which itself is the difference between the curves in Figure \ref{fig:cumposplotsa} for treatment and control groups, the difference in the effect of the ECP ranker on the curve of the control and treatment group also causes the Qini coefficient to improve for the scenario $b_{11} > b_{10}$, as observed in Table \ref{tab:qinis}.

For the scenario $b_{11} < b_{10}$, on the other hand, the effect is inverse. The smaller decrease in $\dot{e}_{10}$ causes the average and maximum profit for the ECP ranker to be lower than the ITE ranker and causes the Qini coefficient to decrease. Nonetheless, it is important to acknowledge that the ECP ranker does achieve, at a granular level, the effect that it aims to accomplish, i.e., an increase (in case of $b_{11} > b_{10}$) or decrease (in case of $b_{11} < b_{10}$ ) in the number of positives in the positive treatment class, as shown in Figure \ref{fig:cumposplotsa}.

As an explanation for the effect of the ECP ranker that differs between the treatment and control groups, which should not occur from a theoretical perspective, we conjecture that it results from selection bias. Three arguments support this hypothesis. 
First, in Figure \ref{fig:p11dis}, the distribution of the positive outcome for positive treatment, $P_{11}$, is plotted for the treatment and control subsets of the Bank data set, both for the LR and XGB model. If both data sets were random subsets, then the predictions of the causal classification model should not depend on the group and these distributions should only marginally differ. The observed difference, however, is substantial, which indicates that both data sets have not been randomly sampled. As a result, the obtained estimates for both $P_{11}$ and $t$ are not well calibrated, which is an essential precondition for the ECP ranker to correctly work, since the calculation of expected causal profit values depends on both $P_{11}$ and $t$. 

\begin{figure}[ht]
 \centering
 \subfigure[Bank: LR]{\includegraphics[width=0.45\textwidth]{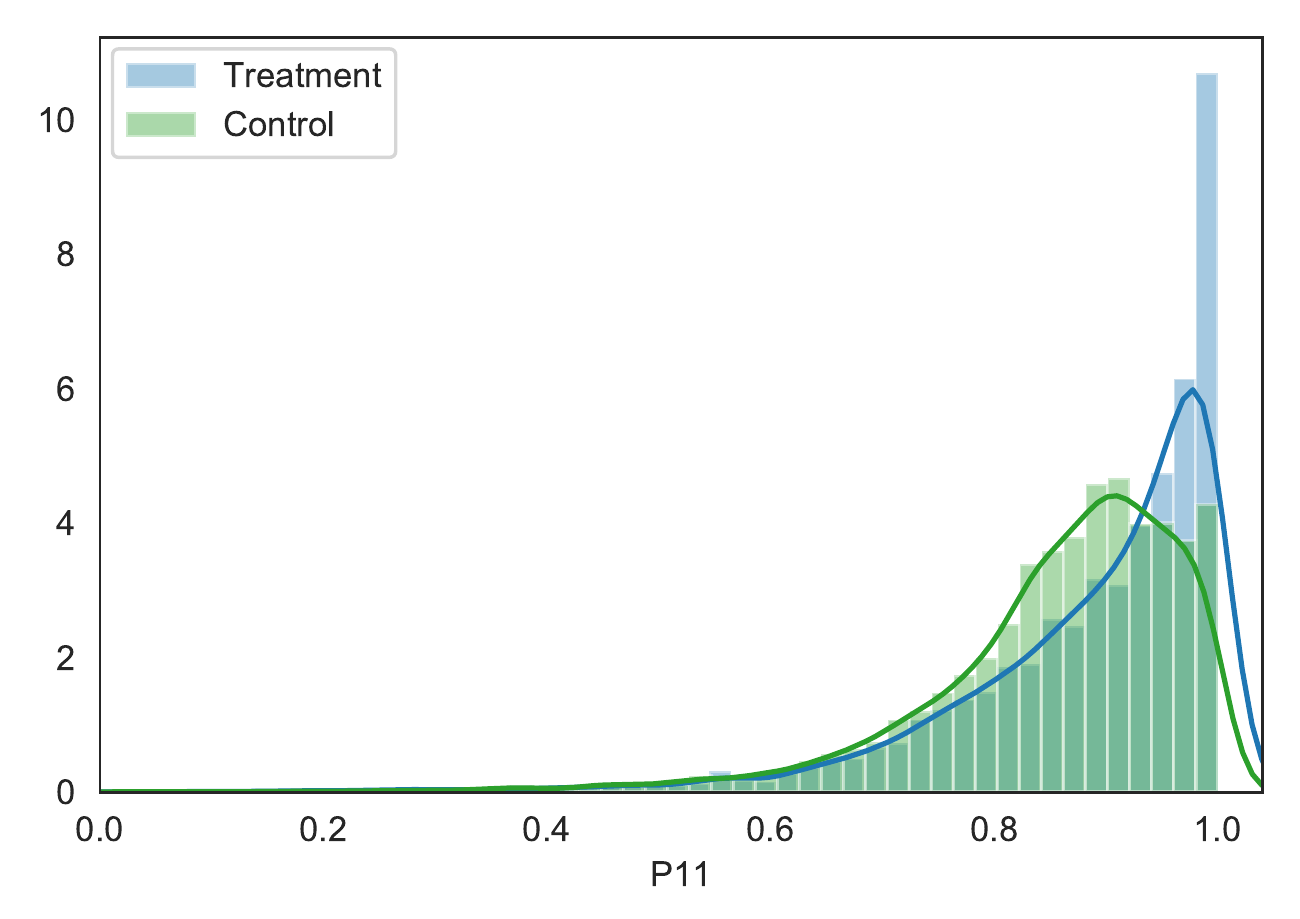}}
 \qquad
 \subfigure[Bank: XGB]{\includegraphics[width=0.45\textwidth]{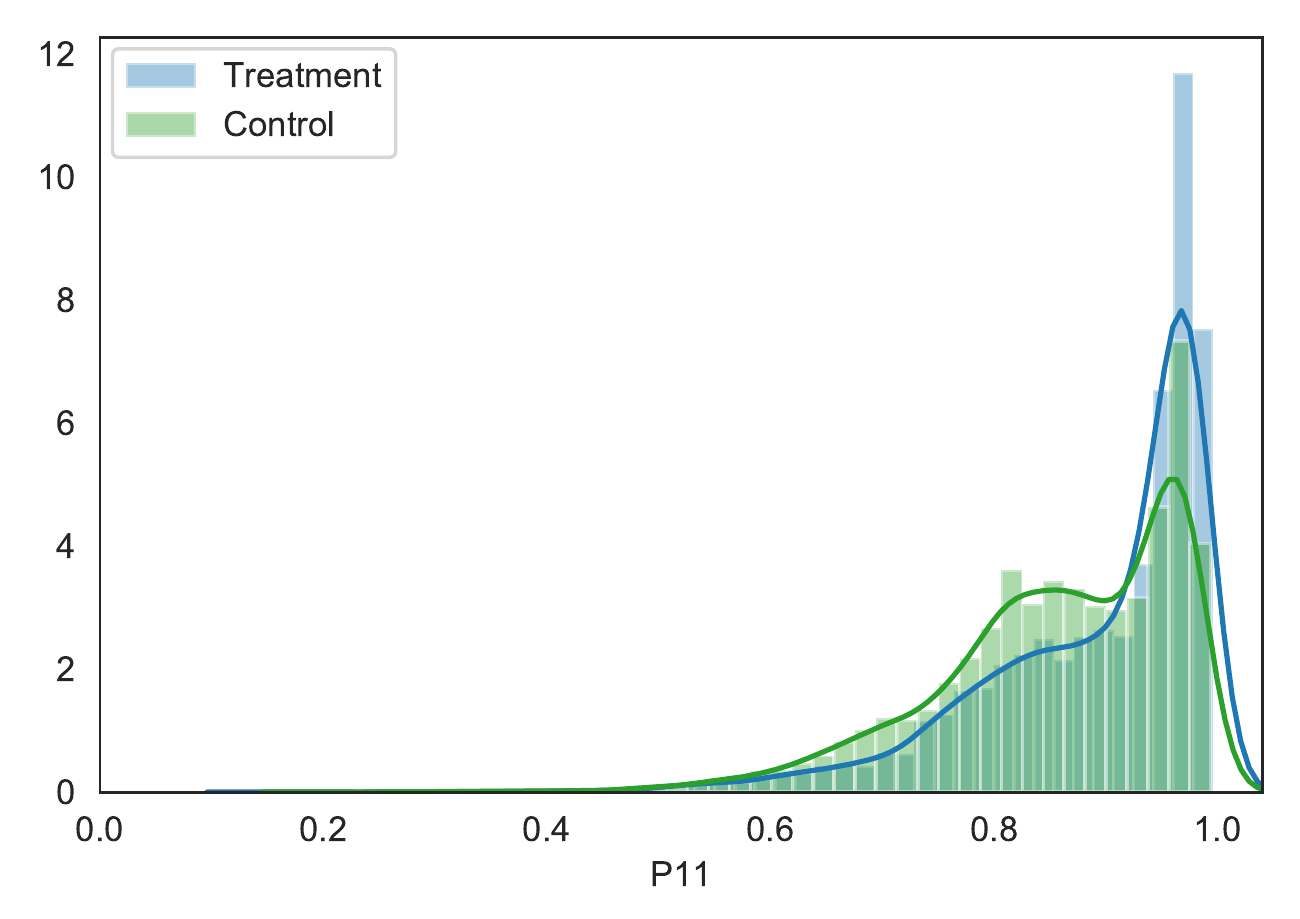}}
 \qquad
 \caption{Distribution of the positive outcome probabilities for positive treatment, $P_{11}$, for the treatment and control data sets}
\label{fig:p11dis}
\end{figure}

Second, the distributions in Figure \ref{fig:p11dis} point toward a preference in the selection of the treatment group for instances with a larger positive outcome probability. This naturally aligns with the objective of marketing campaigns, i.e., boosting the positive outcome rate. Potentially, a predictive model supported the selection process. 

Third, the effect of the ECP ranker on the cumulative proportion of positives for the synthetic data set is effectively approximately the same for treatment and control groups, as shown in Figure \ref{fig:cumposplotsb} and as theoretically expected. Hence, the results on the synthetic data set, which by design is the result of a random trial, support the above hypothesis and indicate that the ECP ranker correctly functions.

\begin{figure}[ht]
 \centering
 \subfigure[Synthetic: treatment.]{\includegraphics[width=0.45\textwidth]{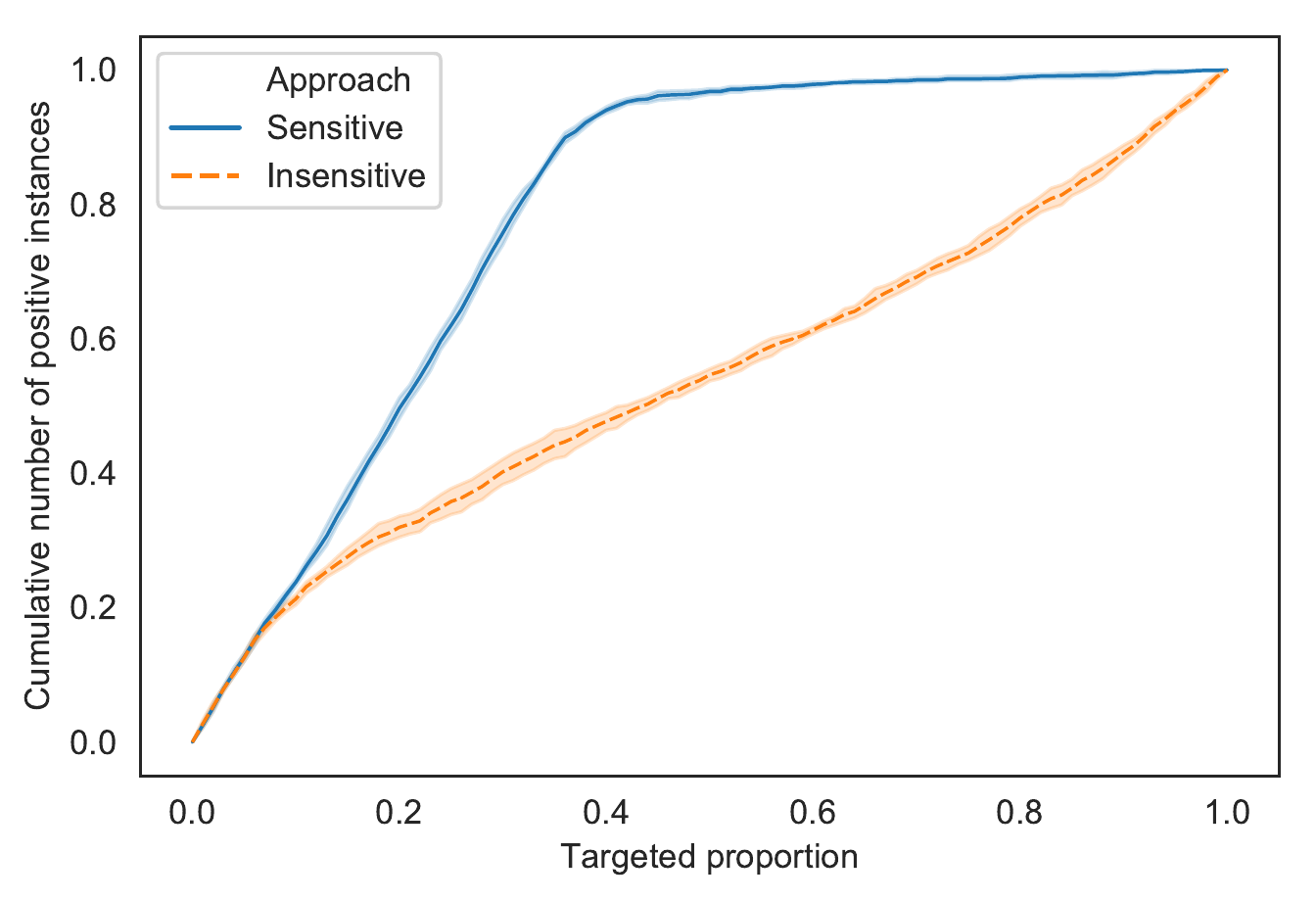}}
 \qquad
 \subfigure[Synthetic: control.]{\includegraphics[width=0.45\textwidth]{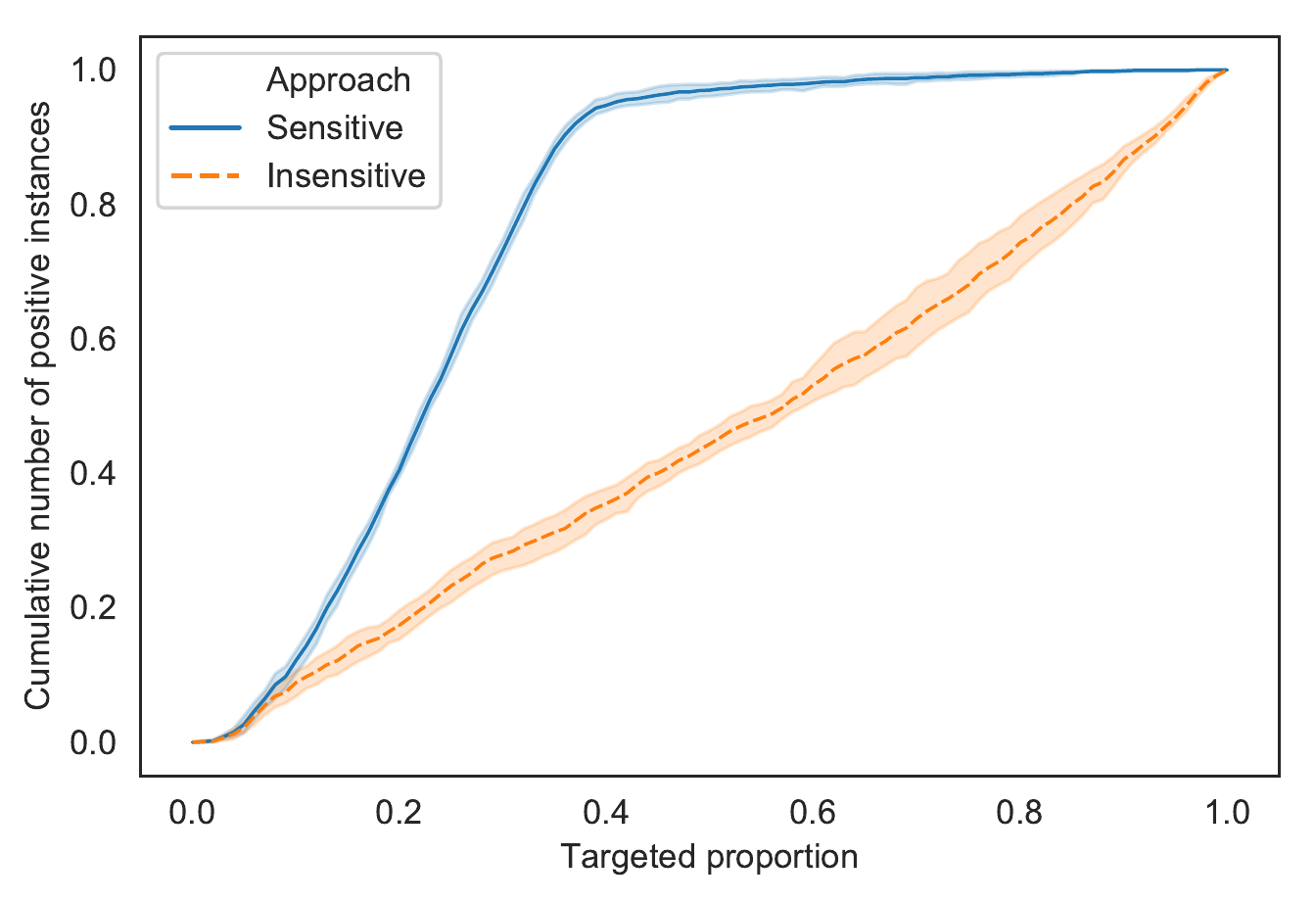}}
 \qquad
 \subfigure[Synthetic: treatment.]{\includegraphics[width=0.45\textwidth]{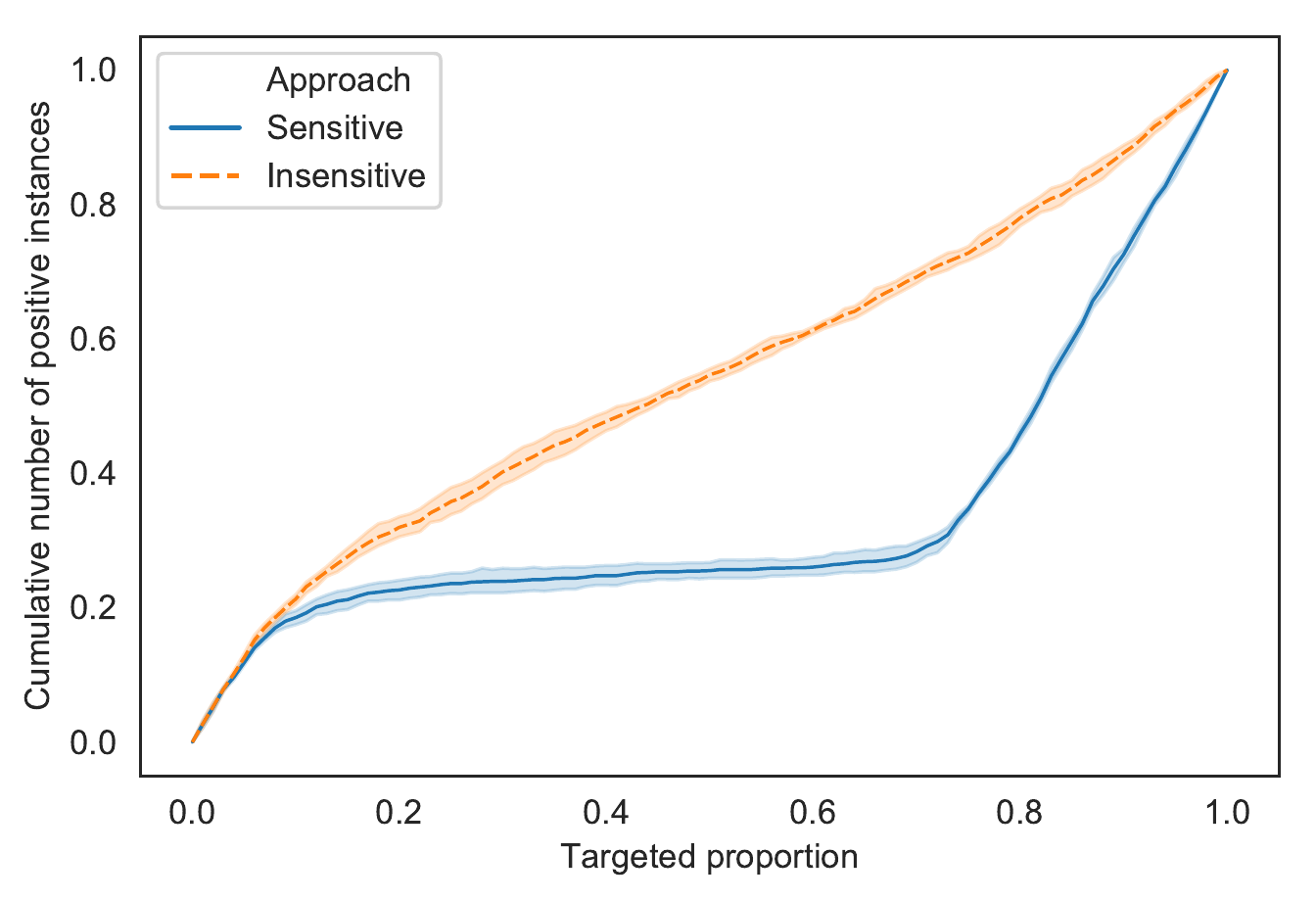}}
 \qquad
 \subfigure[Synthetic: control.]{\includegraphics[width=0.45\textwidth]{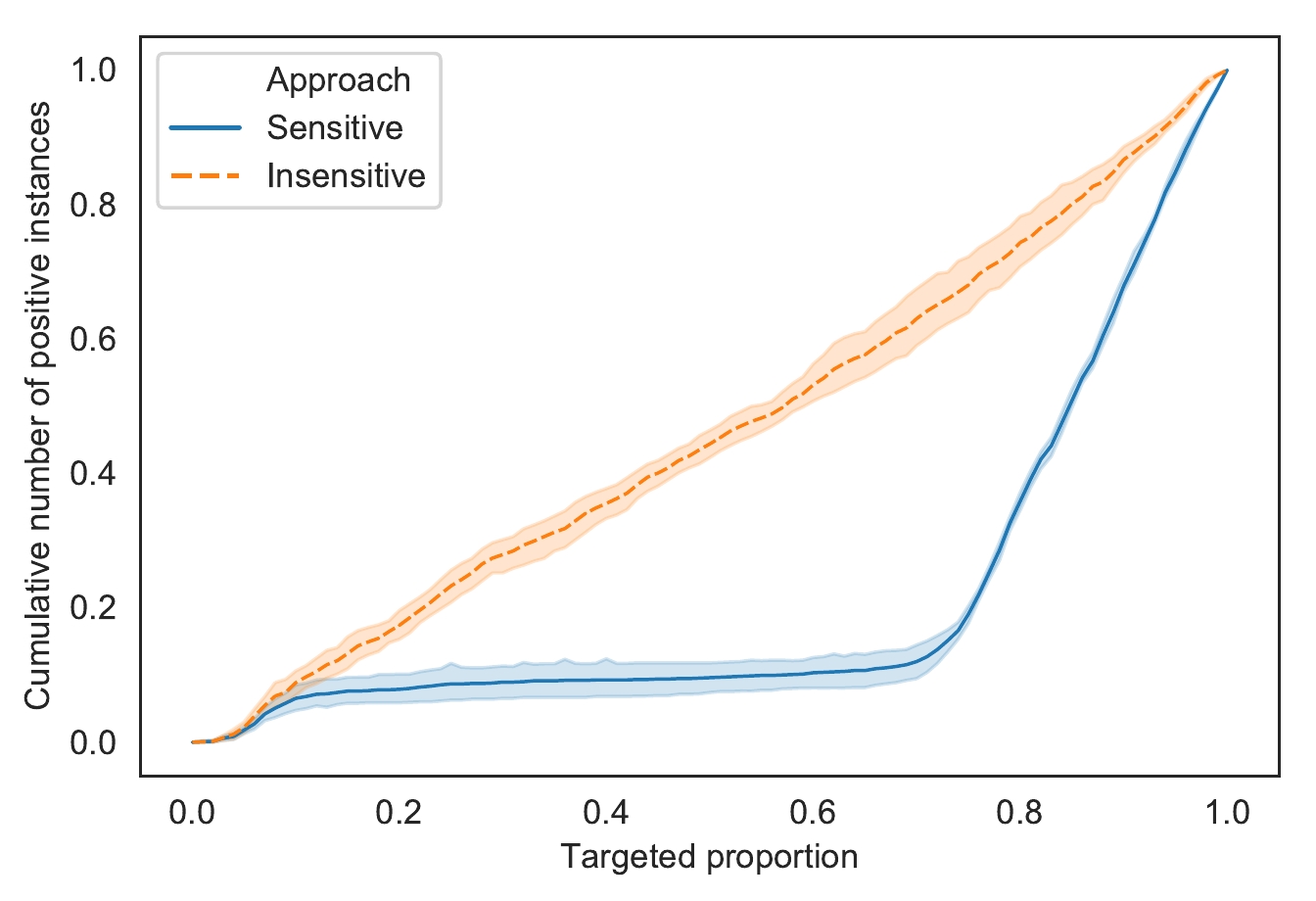}}
 \qquad
 \caption{Cumulative number of positive instances with a positive outcome as a function of the threshold $\tau$ for the Synthetic data set, for the scenario $b_{11} > b_{10}$ (top) and $b_{11} < b_{10}$ (bottom).}
\label{fig:cumposplotsb}
\end{figure}

To accommodate the requirement of having well-calibrated estimates and to use the ECP ranker (which encompasses the cost-sensitive causal classification decision boundary) with observational data, an important direction for future research is to explore the use of causal classification methods that allow addressing potential selection bias (as mentioned in Footnote 2) or, alternatively, calibration methods for postprocessing the estimates of $P_{11}$ and $t$. 

\section{Conclusions and future research}
\label{sec:conclusions}

Double binary causal classification models can be adopted across a variety of operational business processes to predict the effect of a binary treatment on a binary business outcome, as a function of the characteristics of the process instance. This allows optimizing operational decision-making and selecting the optimal treatment to apply for each individual instance to maximize the positive outcome rate. However, while in the literature, a variety of powerful approaches have been presented for learning individual treatment effects, no formal framework for optimal decision-making has been elaborated in presence of cost of treatment and benefit of outcome information.  

In this article, we introduce a formal decision-making framework that takes into account the estimated ITE, the treatment-conditional positive outcome probabilities, and the costs and benefits of the treatments and outcomes. We build on the expected value framework (e.g., see Chapter 11 in \cite{provost2013data}) and extend the established cost-sensitive classification approach introduced by \cite{elkan2001foundations} and derive the cost-sensitive causal classification decision boundary for causally classifying instances in the positive or negative treatment class, to maximize the expected causal profit. Whereas the cost-insensitive causal classification decision boundary for maximizing the positive outcome rate merely assesses the estimated ITE, the cost-sensitive decision boundary is a linear function of the ITE and the positive outcome probability given the application of the positive treatment and is characterized by the problem-specific cost and benefit parameters. 

The presented cost-sensitive decision-making framework implies a novel approach for ranking and selecting instances that are to be treated with the positive treatment. Traditionally, instances are ranked based on the ITE, whereas the presented expected causal profit ranker takes into account the ITE, the positive outcome probability given the application of the positive treatment, as well as the cost and benefit parameters. The ECP ranker ranks instances with the objective of maximizing the expected causal profit for each possible number of selected instances and supports the selection of the optimal subset in the positive treatment class. 

We empirically evaluate the proposed ranking approach on synthetic and marketing data. Logistic regression and XGBoost are applied to estimate individual treatment effects in combination with three causal modeling metalearning methods. We evaluate model performance in terms of the Qini coefficient as well as average and maximum causal profit. For ranking instances, we adopt the traditional, cost-insensitive approach and the newly introduced ECP ranker. The experimental results acknowledge the potential of the ECP ranker to improve the profitability, although well-calibrated estimates of the positive outcome probability for the positive treatment and the ITE are preconditions for the presented approach to work correctly. Note that this condition also applies to the use of causal classification methods and the ITE ranker when potential selection bias is not addressed in learning a model. 
As a digital appendix to this article, we provide the experimental code that allows replicating the reported results. Since we strongly believe the presented framework opens vast opportunities for further research and practical application, we therefore hope to promote further research and facilitate the adoption of the framework by fellow scientists and practitioners.

Since the results are limited to double binary causal classification, a first stream of further research is to formulate the cost-sensitive decision boundary and ranking approach for multitreatment and multiclass outcome causal classification problems, as well as for continuous treatments (e.g., discount amount or price, time, dosage, temperature, etc.) and continuous outcome (e.g., amount or time spent, yield, number of visits, etc.) causal learning problems \cite{gubela2020response}. 
A second stream of further research is to explore alternative and more complex cost and benefit parameter setups, which in a realistic setting may be dependent on the proportion of targeted instances, stochastic in nature, or may be instance-dependent instead of class-dependent. 
A third stream concerns causal model learning. While the proposed framework allows  factoring in costs and benefits in a postprocessing stage similar to cost-sensitive learning methods, they may be taken into account during causal model development. Additionally, the effect of cost-sensitive causal learning on the performance and stability of causal models in cases of imbalanced data sets, both in terms of the class and the treatment distribution, remains to be explored. 
Moreover, the use of advanced causal machine learning methods that are able to efficiently and effectively handle selection bias is to be evaluated for obtaining calibrated estimates of outcome probabilities and individual treatment effects. 
Finally, further empirical research is necessary to validate and explore the use of the proposed evaluation and ranking approach across various decision-making settings.

\textbf{Acknowledgment}: The authors acknowledge the support of Innoviris, the Brussels Region Research funding agency.

\bibliographystyle{plain}
\bibliography{biblio}
\end{document}